\newcommand{\ReLU}{\mathrm{ReLU}}
\newcommand{\proj}{\mathrm{Proj}}
\pgfplotsset{compat=1.11}
\providecommand{\keywords}[1]{\textbf{Key words:} #1}
\title{ Coefficient-to-Basis Network: A Fine-Tunable Operator Learning Framework for Inverse Problems with Adaptive Discretizations and Theoretical Guarantees }
\date{}
\author{Zecheng Zhang \thanks{Department of Mathematics, Florida State University, Tallahassee, FL 32306.  \texttt{Email: zecheng.zhang.math@gmail.com.} }
\and
Hao Liu \thanks{Department of Mathematics, Hong Kong Baptist University, Hong Kong, China. \texttt{Email: haoliu@hkbu.edu.hk.} }
\and
Wenjing Liao \thanks{School of Mathematics, Georgia Institute of Technology, Atlanta, GA 30332. \texttt{Email: wliao60@gatech.edu.} }
\and
Guang Lin \thanks{Department of Mathematics and Mechanical Engineering, Purdue University, West Lafayette, IN 47907. \texttt{Email: guanglin@purdue.edu.}}
}
\begin{document}

\maketitle

\begin{abstract}
We propose a Coefficient-to-Basis Network (C2BNet), a novel framework for solving inverse problems within the operator learning paradigm. C2BNet efficiently adapts to different discretizations through fine-tuning, using a pre-trained model to significantly reduce computational cost while maintaining high accuracy. Unlike traditional approaches that require retraining from scratch for new discretizations, our method enables seamless adaptation without sacrificing predictive performance. 
Furthermore, we establish theoretical approximation and generalization error bounds for C2BNet by exploiting low-dimensional structures in the underlying datasets. 
Our analysis demonstrates that C2BNet adapts to low-dimensional structures without relying on explicit encoding mechanisms, highlighting its robustness and efficiency. To validate our theoretical findings, we conducted extensive numerical experiments that showcase the superior performance of C2BNet on several inverse problems. The results confirm that C2BNet effectively balances computational efficiency and accuracy, making it a promising tool to solve inverse problems in scientific computing and engineering applications.  
\end{abstract}

\keywords{operator learning, inverse problem, fine-tuning, approximation theory, generalization error}

\section{Introduction}
Operator learning between infinite-dimensional function spaces is an important task that arises in many disciplines of science and engineering. 
In recent years, deep neural networks have been successfully applied to learn operators for solving numerical partial differential equations (PDEs) \citep{bhattacharya2021model,lu2021learning,li2020fourier}, image processing \citep{ronneberger2015u}, and inverse problems \citep{li2020nett,fan2019solving}.

Operator learning is challenging in general, since the input and output functions lie in infinite-dimensional spaces. To address this difficulty, many deep operator learning approaches are proposed in an encoder-decoder framework. This framework employs encoders to map the input and output functions to finite-dimensional vectors and then learn a map between these dimension-reduced vectors. Popular deep operator learning methods in this encoder-decoder framework  include  PCANet with principal component analysis \citep{hesthaven2018non,bhattacharya2021model,de2022cost} and Fourier Neural Operators (FNO) based on fast Fourier transforms \citep{li2022fourier, wen2022u, lifourier,guibas2111adaptive, zhu2023fourier}. 
These methods utilize deterministic or data-driven linear encoders and decoders for dimension reduction, and neural networks are used to learn the map. For nonlinear dimension reduction, autoencoders have demonstrated success in extracting low-dimensional nonlinear  structures in data \citep{bourlard1988auto,hinton1993autoencoders,schonsheck2022semi,liu2024deep}, and autoencoders have been applied to operator learning in \citet{seidman2022nomad,kontolati2023learning,liu2025generalization}.
Other widely used deep operator learning architectures include the Deep Operator Network (DeepONet) \citep{lu2021learning, lin2023b, lin2021accelerated, zhang2024d2no, zhang2023belnet, zhang2024modno, goswami2022physics, wang2021learning, yu2024nonlocal,hao2025laplacian}, random feature models \citep{nelsen2024operator}, among others.

Apart from the computational advances, theoretical works were  established to understand the representation and generalization capabilities of deep operator learning methods. A theoretical foundation on the approximation property of PCANet was established in \citet{bhattacharya2021model}. This was followed by a more comprehensive study in \cite{lanthaler2023operator}, which derived both upper and lower bounds for such approximations. Further contributions were made in \cite{liu2024deep}, where a generalization error bound was established for the encoder-decoder-based neural networks, including PCANet as a specific instance.
The universal approximation property of FNO was analyzed in \cite{kovachki2021universal}.
Deep Operator Networks (DeepONets) were proposed based on a universal approximation theory in  \cite{chen1995universal}, and were further analyzed in \cite{lanthaler2022error,schwab2023deep}. More recently, neural scaling laws governing DeepONets were investigated in \cite{liu2024neural}  based on  approximation and generalization theories. Furthermore, \cite{lanthaler2023parametric} explored lower bounds on the parameter complexity of operator learning, demonstrating that achieving a power scaling law for general Lipschitz operators is theoretically impossible, despite of the empirical observations in \citet{lu2021learning,de2022cost}.
However, these theoretical frameworks remain limited to fully explain the empirical success of deep neural networks in operator learning, primarily due to the inherent challenges posed by the curse of dimensionality.

In science and engineering applications, many datasets exhibit low-dimensional structures. For example, even though the images in ImageNet \citep{russakovsky2015imagenet} have an ambient dimension exceeding $150,000$,  \cite{popeintrinsic} showed that the ImageNet dataset has an intrinsic dimension about $40$. Many high-dimensional datasets exhibit repetitive patterns and special structures including rotation and translation, which contribute to a low intrinsic dimensionality \citep{tenenbaum2000global,osher2017low}.
In PDE-related inverse problems, one often needs to infer certain unknown parameters in the PDE from given observations, such as inferring the permeability in  porous media equations given the solutions \citep{efendiev2006preconditioning}.
In inverse problems, the unknown parameters often exhibit a low-dimensional structure. For example, the unknown parameters considered in \citet{lu2021deep,hasani2024generating} are generated by a few Fourier bases.

By leveraging low-dimensional structures, existing deep learning theory has shown that the approximation and generalization errors of deep neural networks for function estimation converge at a fast rate depending on the intrinsic dimension of data or learning tasks \citep{chen2022nonparametric,nakada2020adaptive,liu2021besov,cloninger2021deep,schmidt2020nonparametric}, in contrast to a slower rate of convergence in high-dimensional spaces \citep{yarotsky2017error,lu2021deep,suzukiadaptivity}. 
A generalization error bound of an autoencoder-based neural network (AENet) for operator learning  was established in \cite{liu2025generalization}, where the convergence rate depends on the intrinsic dimension of the dataset. Recently, generalization error bounds  were established in \cite{dahal2022deep,havrillaunderstanding} for generative models and transformer neural networks when the input data lie on a low-dimensional manifold.
Another approach to mitigate the curse of dimensionality considers operators with special structures, such as those arising in elliptic partial differential equations \citep{marcati2023exponential} or holomorphic operators \citep{opschoor2022exponential,adcock2024optimal}.

In this paper, we address operator learning problems arising from inverse problems \citep{wu2019inversionnet, zhu2023fourier} to determine inverse Quantities of Interest (QoIs) associated with PDEs given observations on PDE solutions. In many practical scenarios, the inverse QoIs in PDEs exhibit a simpler structure compared to the corresponding PDE solutions.
For instance, in the context of heat diffusion in non-uniform materials, the diffusivity parameter is inherently material dependent. 
If the domain can be partitioned into several subregions, each characterized nearly by a uniform material property, the diffusivity parameter can be represented by a piecewise constant function \citep{liu2021rate}. A similar framework is explored in \cite{vaidya1996convection} for the convection-diffusion of solutes in heterogeneous media. When the unknown parameter in PDEs is piecewise constant, it can be expressed by a linear combination of characteristic functions, each corresponding to a distinct subregion.
These characteristic functions form a set of orthogonal bases. In contrast, the solutions to PDEs often exhibit more complex structures. Consequently, it is natural to assume that the PDE parameters possess low-dimensional linear structures, while the PDE solutions exhibit low-dimensional nonlinear structures.

Motivated by this observation, we propose a novel framework termed a Coefficient-to-Basis Network (C2BNet) to solve inverse problems. 
The proposed network architecture comprises two key components: (1) a coefficient network, which maps the input function (representing the PDE solution) to the coefficients corresponding to the basis functions of the output (PDE parameter), and (2) a basis network, which is responsible for learning the basis functions in the ouput space.

A significant challenge in solving inverse problems by machine learning lies in adapting to new tasks or inferring QoIs in different regions of the domain based on new data. A common scenario arises when the new QoIs are discretized on a finer mesh, leading to a higher-dimensional output compared to the previous tasks.
To address this challenge, we propose a fine-tuning approach for a pretrained network that was initially trained on coarser discretizations. Our theoretical analysis demonstrates that only updating  a linear layer of the pre-trained network is sufficient to adapt to the new task on finer discretizations. 
This approach significantly reduces the computational complexity associated with retraining the entire network on the new dataset while maintaining the accuracy of the prediction.
By assuming that the input functions reside on a low-dimensional manifold, we establish approximation and generalization error bounds for our proposed C2BNet. In particular, these error bounds depend crucially on the intrinsic dimension of the input functions. Our contributions are summarized as follows:
\begin{itemize}
\item We introduce C2BNet, a novel framework for learning operators designed for PDE-based inverse problems. C2BNet can be efficiently fine-tuned to accommodate different discretizations using a pre-trained network, significantly reducing the computational cost of retraining without sacrificing accuracy.

\item We establish approximation and generalization error bounds for C2BNet incorporating low-dimensional structures in datasets. Our results demonstrate that C2BNet adapts to low-dimensional data structures without an additional encoder-decoder mechanism.

\item We validate the performance of C2BNet and the theoretical findings through comprehensive numerical experiments.
\end{itemize}

The remainder of this paper is structured as follows. In Section \ref{sec.preliminary}, we introduce some concepts and background definitions necessary for an understanding of the proposed framework. Section \ref{sec.methodology} presents details of of the proposed C2BNet. A theoretical analysis of C2BNet is provided in Section \ref{sec.theory}, with proofs detailed in Section \ref{sec.mainproof}. The efficacy of C2BNet is demonstrated through a series of numerical experiments in Section \ref{sec.experiments}. Finally, we conclude the paper in Section \ref{sec.conclusion}.

{\bf Notations}: In this paper, we use bold letters to denote vectors and normal letters to denote scalars. Calligraphic letters are used to denote sets. For a set $\Omega$, we use $|\Omega|$ to denote its volume.

\section{Preliminary}
\label{sec.preliminary}
We first introduce some definitions about manifolds. We refer readers to \citet{tu2011manifolds,lee2006riemannian} for more detailed discussions.
\begin{definition}[Chart]
    Let $\cM$ be a $d$-dimensional manifold embedded in $\RR^D$. A chart of $\cM$ is a pair $(Q,\phi)$ where $Q\subset \cM$ is an open subset of $\cM$, $\phi: Q\rightarrow \RR^d$ is a homeomorphism.
\end{definition}
The transformation $\phi$ in a chart defines a coordinate map on $Q$. 
An atlas of $\cM$ is a collection of charts that covers $\cM$:
\begin{definition}[$C^k$ Atlas]
Let $\{(Q_k,\phi_k)\}_{k\in \cK}$ be a collection of charts of $\cM$ with $\cK$ denoting the set of indices. It is a $C^k$ atlas of $\cM$ if 
\begin{enumerate}[label=(\roman*)]
    \item $\cup_{k\in \cK} Q_k=\cM$,
    \item the mappings
    $$\phi_j\circ \phi_k^{-1}: \phi_k(Q_j\cap Q_k)\rightarrow \phi_j(Q_j\cap Q_k) \mbox{ and } \phi_k\circ \phi_j^{-1}: \phi_j(Q_j\cap Q_k)\rightarrow \phi_k(Q_j\cap Q_k)$$
    are $C^k$ for any $j,k\in \cK$.
\end{enumerate}
A finite atlas is an atlas with a finite number of charts.
\end{definition}
On a smooth manifold, we define $C^s$ functions as follows.
\begin{definition}[$C^s$ functions on $\cM$]
    Let $\cM$ be a smooth manifold and $f: \cM\rightarrow \RR$ be a function defined on $\cM$. The function is called a $C^s$ function on $\cM$ if for any chart $(Q,\phi)$ of $\cM$, the composition $f\circ \phi^{-1}: \phi(Q)\rightarrow \RR$ is a $C^s$ function.
\end{definition}
To measure the complexity of a manifold, we define reach as follows.
\begin{definition}[Reach \citep{federer1959curvature,niyogi2008finding}]
    Let $\cM$ be a manifold embedded in $\RR^D$. We define 
    $$
    G=\{\xb\in \RR^D: \exists\  \yb \neq \zb\in \cM \mbox{ such tha } d(\xb,\cM)=\|\xb-\yb\|_2=\|\xb-\zb\|_2\},
    $$
    where $d(\xb,\cM)$ is the distance from $\xb$ to $\cM$. The reach of $\cM$ is defined as
    $$
    \tau=\inf_{\xb\in \cM} \inf_{\yb\in G} \|\xb-\yb\|_2.
    $$
\end{definition}
The reach of a manifold gives a characterization of curvature. 
A hyper-plane has a reach $\tau=\infty$, and a hyper-sphere with radius $r$ has a reach $\tau=r$.

In this paper, we consider feedforward neural networks in the form of
\begin{align}
	f_{\rm NN}(\xb)=W_L\cdot \ReLU(W_{L-1}\cdots \ReLU(W_1\xb+\bb_1) \cdots +\bb_{L-1})+\bb_L,
	\label{eq.relu.net}
\end{align}
where $W_l$'s are weight matrices, $\bb_l$'s are bias, and $\ReLU(a)=\max\{a,0\}$ is the rectified linear unit and is applied elementwisely to its argument.

We consider the following network class
\begin{align}
	\cF_{\rm NN}(&d_1,d_2,L,p,K,\kappa,R)= \nonumber\\
	&\{\fb_{\rm NN}=[f_1,\cdots, f_{d_2}]^{\top}|  f_{k}:\Omega \rightarrow \RR \mbox{ is in the form of (\ref{eq.relu.net}) with } L \mbox{ layers, width bounded by } p, \nonumber\\
	& \|f_{k}\|_{L^{\infty}(\Omega)}\leq R, \ \|W_l\|_{\infty,\infty}\leq \kappa, \ \|b_l\|_{\infty}\leq \kappa,\  \sum_{l=1}^L \|W_l\|_0+\|b_l\|_0\leq K, \ \forall l   \}.
\end{align}

\section{Coefficient to Basis Network (C2BNet) for operator learning}
 \label{sec.architecture}
 Our objective is to learn an unknown operator \begin{equation}
 \Psi: \cX\rightarrow \cY,
 \label{eq:operator}
 \end{equation}
 where $\cX\subset L^2(\Omega_{\cX})$ and $\cY\subset L^2(\Omega_{\cY})$ are input and output function sets with domain $\Omega_{\cX}\subset \RR^{s_1}$ and $\Omega_{\cY}\subset \RR^{s_2}$ respectively. In addition, $\cX$ and $\cY$ belong to separable Hilbert spaces with inner products $\langle u_1,u_2\rangle_{\cX}$ and $\langle v_1,v_2\rangle_{\cY}$, respectively.

 In PDE-based inverse problems, the input set $\cX$ contains the PDE solutions and the output set $\cY$ contains the PDE parameters to be determined. Motivated by low-dimensional linear structures in the PDE parameters
\citep{liu2021rate,vaidya1996convection}, we assume that the output functions in $\cY$ approximately lie in a low-dimensional subspace.
\begin{assumption}\label{assum.cY}
	Suppose there exists a set of orthonormal functions $\{\omega_k\}_{k=1}^{d_2}$ and a constant $\zeta\geq 0$ so that  any $v\in \cY$ satisfies
    \begin{align}
        \left\|v-\sum_{k=1}^{d_2} \langle v,\omega_k\rangle_{\cY}\omega_k\right\|_{L^{\infty}(\Omega_{\cY})}\leq \zeta.
    \end{align}
    We denote 
	\begin{align}
		\proj(v)=\sum_{k=1}^{d_2} \alpha^v_k(v)\omega_k, \quad \mbox{ with } \alpha^v_k(v)=\langle v,\omega_k\rangle_{\cY}.
		\label{eq.v.decom}
	\end{align}
\end{assumption}
Assumption \ref{assum.cY} ensures the existence of a set of orthonormal bases whose span can approximate functions in $\cY$ with  $\zeta$ error. This assumption is inspired by finite element methods and existing works on operator learning. In finite element methods \citep{li2017numerical}, the output function space is approximated by a finite element space, spanned by a set of basis functions. For the encoder-decoder-based operator learning approaches, such as PCANet and Fourier neural operators \citep{bhattacharya2021model,lifourier,liu2024deep,lanthaler2023operator}, linear encoders and decoders are used based on certain bases. In these works, the bases are either deterministic, such as Fourier bases and Legendre polynomials, or estimated by data-driven tools, such as PCA. 

For any input $u\in \cX$ and output $v\in \cY$, we denote their discretized counterparts by $\ub=S_{\cX}(u)\in \RR^{D_1}$ and $\vb=S_{\cY}(v)\in \RR^{D_2}$ respectively, where $S_{\cX}$ and $S_{\cY}$ denote the corresponding discretization operators.
 For a given discretization operator $S_{\cX}$, we denote the  induced inner product in $\RR^{D_1}$ by $\langle \cdot,\cdot \rangle_{S_{\cX}}$. One way to define such an inner produce is by using a quadrature rule:
\begin{align}
    \langle S_{\cX}(u_1), S_{\cX}(u_2) \rangle_{S_{\cX}}= \sum_{k=1}^{D_1} \tau_k (\ub_1)_k(\ub_2)_k,
    \label{eq.inner.dis}
\end{align}
where $\tau_k>0$ are the weights in the quadrature rule. When {the functions in $\cX$ are smooth} and the discretization grid of $S_{\cX}$ is sufficiently fine, we expect $\|S_{\cX}(u)\|_{S_{\cX}}\approx \|u\|_{\cX}$ for any $u\in \cX$.
We make the following assumption on the discretization operator $S_{\cY}$:
\begin{assumption}
\label{assum.norm}
    Assume that, for any $v\in \cY$, 
    \begin{align}
        0.5\|v\|_{\cY} \leq \|S_{\cY}(v)\|_{S_{\cY}}\leq 2 \|v\|_{\cY}, \quad 0.5\|\proj(v)\|_{\cY} \leq \|S_{\cY}\circ\proj(v)\|_{S_{\cY}}\leq 2 \|\proj(v)\|_{\cY}.
    \end{align}
\end{assumption}
Assumption \ref{assum.norm} is a weak assumption. This holds as long as {functions in $\cY$ are uniformly regular and} the discretization grid is sufficiently fine. For example, according to the Nyquist–Shannon sampling theorem \citep{shannon1949communication}, bandlimited functions  can be completely determined from its discretized counterparts on a sufficiently fine grid. A specific example is provided in \citet[Example 1]{liu2025generalization} to demonstrate that Assumption \ref{assum.norm} can be satisfied if the functions in $\cY$ are band limited and the discretization grid is sufficiently fine.
We can find weights $\omega_k$'s in (\ref{eq.inner.dis}) using a quadrature rule to approximate the continuous inner product by the discretized counterpart.

 \subsection{Coefficient to Basis Network (C2BNet) architecture}
Our network structure is designed based on the decomposition in (\ref{eq.v.decom}), which consists of two components: the coefficients $\{\alpha^v_k(v)\}_{k=1}^{d_2}$ and the bases $\{\omega_k\}_{k=1}^{d_2}$. We will design a network for each component. 
In (\ref{eq.v.decom}), for each $k$, the coefficient $\alpha^v_k(v)$ is a functional of  the output function $v$ in $\cY$. Since $v=\Psi(u)$, we have
\begin{align}
    \alpha^v_k(v)=\alpha^v_k\circ\Psi(u).
    \label{eq.ceof}
\end{align}
which implies that each coefficient is a functional of the input function $u$ as well. We construct a coefficient network $\fb_{coef}: \RR^{D_1}\rightarrow \RR^{d_2}$ to learn the mapping in \eqref{eq.ceof}:
$$\ub\xrightarrow[]{\fb_{coef}} [\alpha^v_1\circ \Psi(u),..., \alpha^v_{d_2}\circ \Psi(u)]^{\top},$$
where $\ub=S_{\cX}(u)$ denotes the discretized counterpart of $u$.

Given a disretization grid in $\Omega_{\cY}$ associated with the discretization operator $S_{\cY}$, we can represent the basis functions
 $\{\omega_k\}_{k=1}^{d_2}$ by a set of vectors $\{S_{\cY}(\omega_k)\}_{k=1}^{d_2}$. In C2BNet, we use a linear layer to learn this set: $\fb_{basis}: \RR^{d_2}\rightarrow \RR^{D_2}$.

Our operator network $\Psi_{\rm NN}$ that approximates the target operator $\Psi$ in \eqref{eq:operator} is constructed as
\begin{align}
    \Psi_{\rm NN}=\fb_{basis}\circ \fb_{coef}.
    \label{eq.psi.archi}
\end{align}
The architecture of $\Psi_{\rm NN}$ is illustrated in Figure \ref{fig:network}(a). 

\subsection{Fine-tuning on a new discretization in $\Omega_{\cY}$ }
\label{sec.methodology}
A key advantage C2BNet is adapting to different discretizations. Once the network is trained on a specific discretization associated with $ S_{\cY}$, it can be efficiently fine-tuned to accommodate a new discretization associated with $S'_{\cY}$. As indicated by (\ref{eq.ceof}), the coefficients $\{\alpha^v_k(v)\}_{k=1}^{d_2}$ are determined solely by the input function $u$ and are independent of the discretization $S_{\cY}$. In our framework, altering the discretization in \( \Omega_{\mathcal{Y}} \) only requires updating the basis network \( \fb_{basis} \) to represent \( \{S'_{\mathcal{Y}}(\omega_k)\}_{k=1}^{d_2} \) instead of \( \{S_{\mathcal{Y}}(\omega_k)\}_{k=1}^{d_2} \). Consequently, if the coefficient network \( \fb_{coef} \) accurately learns the coefficients, only \( \fb_{basis} \) needs to be fine-tuned in the new data set to learn the updated basis functions. This strategy eliminates the need to retrain the entire network, as only a single linear layer needs to be updated, which significantly reduces the computational cost.
Our fine-tuning strategy is illustrated in Figure \ref{fig:network}(b).
\begin{figure}[t!]
    \centering
    \begin{tabular}{cc}
         \includegraphics[width=0.48\textwidth]{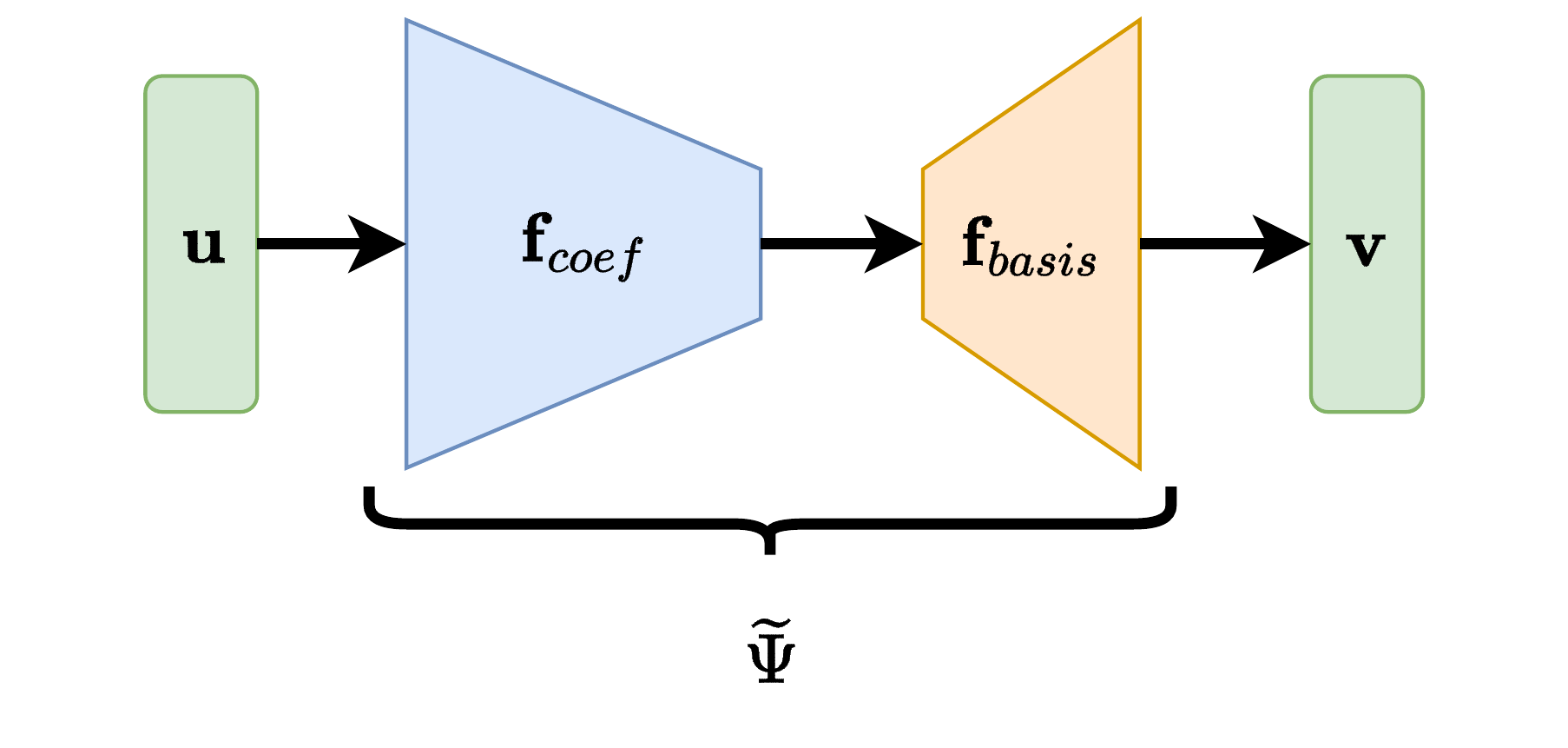}&
         \includegraphics[width=0.48\textwidth]{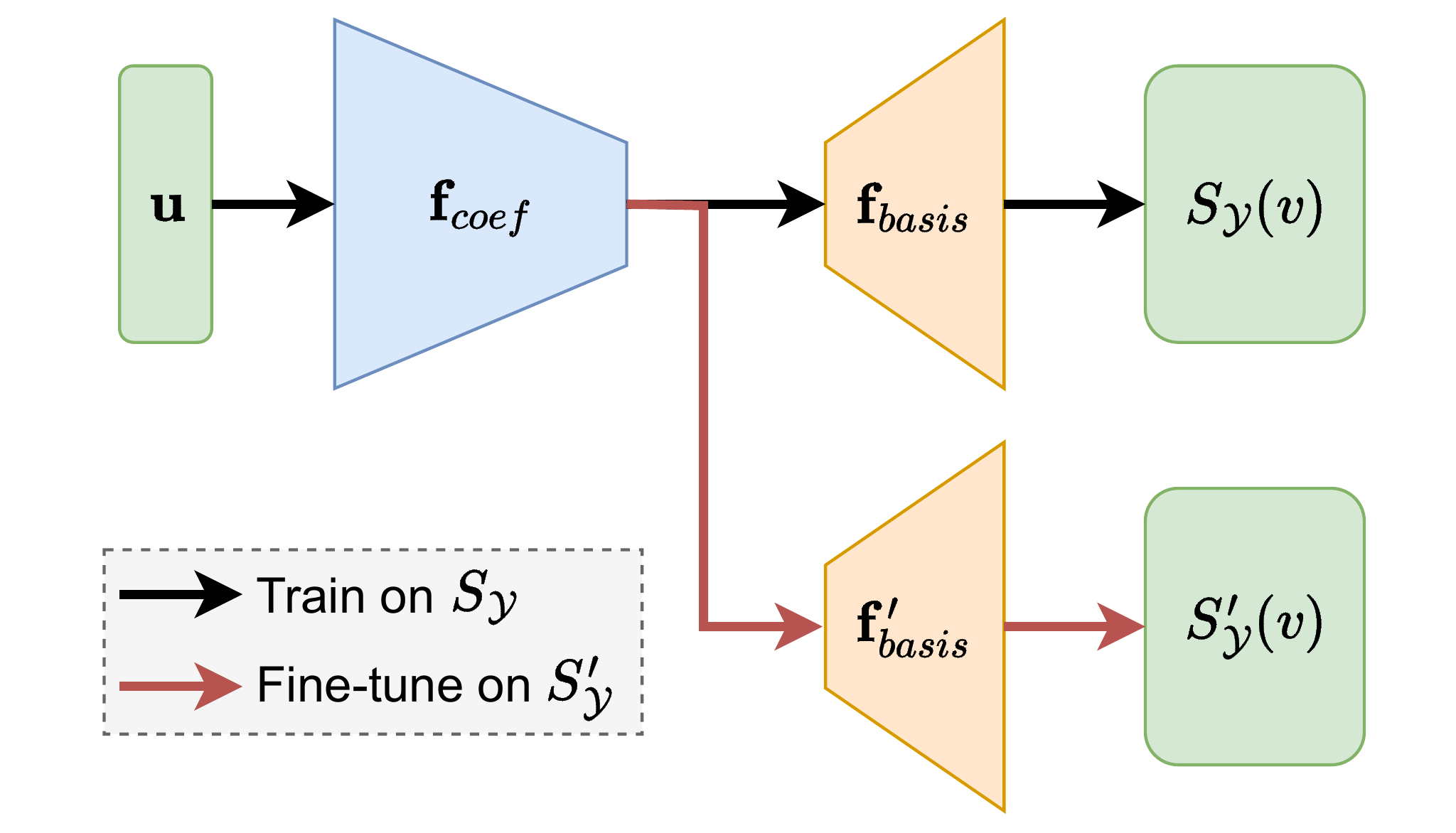}\\
         (a)& (b) 
    \end{tabular}
    \caption{(a) An illustration of C2BNet in (\ref{eq.psi.archi}). (b) Fine-tuning  from discretization $S_{\cY}$ to $S'_{\cY}$. }
    \label{fig:network}
\end{figure}

\section{Approximation and generalization theory of C2BNet}
\label{sec.theory}
In this section, approximation and generalization error bounds are established for C2BNet in Section \ref{sec.architecture}. C2BNet is designed to utilize low-dimensional linear structures in  output functions. By further leveraging low-dimensional nonlinear structures in  input functions, we prove approximation and generalization error bounds depending on the intrinsic dimension of the input functions.
All proofs are deferred to Appendix \ref{sec.mainproof}. 

Our theoretical analysis requires some assumptions.
The first assumption below says that the functions in $\cX$ and $\cY$ have a bounded domain and the function values are bounded.
\begin{assumption}
	\label{assum.boundedness}
	Suppose the followings hold for $\cX$ and $\cY$: there exist $B_1,B_2,M_1,M_2>0$ such that
	\begin{enumerate}[label=(\roman*)]
		\item For any $\xb\in \Omega_{\cX}, \yb\in \Omega_{\cY}$, $\|\xb\|_{\infty}\leq B_1$, $\|\yb\|_{\infty}\leq B_2$.
		\item For any $u\in \cX, v\in \cY$, $\|u\|_{L^{\infty}(\Omega_{\cX})}\leq M_1, \ \|v\|_{L^{\infty}(\Omega_{\cY})}\leq M_2$.
	\end{enumerate}
\end{assumption}
Assumption \ref{assum.boundedness} implies that the domain and output of any function in $\cX$ and $\cY$ are bounded. 
The next assumption leverages  low-dimensional structures in input functions:
\begin{assumption}\label{assum.cX}
	Suppose the input function set $\cX$ satisfies the followings:
	\begin{enumerate}[label=(\roman*)]
		\item Suppose $S_{\cX}$ discretizes functions in $\cX$ without loss of information: there exists a map $D_{\cX}: \RR^{D_1}\rightarrow \cX$ such that
		$$
		D_{\cX}\circ S_{\cX}(u)=u
		$$ 
		for any $u\in \cX$. 
		\item Suppose that $\{\ub =\cS_{\cX}(u): \ u \in \cX\}$ is located on a $d_1$-dimensional Riemannian manifold $\cM$ isometrically embedded in $\RR^{D_1}$, with a positive reach $\tau>0$.
		\item Suppose $D_{\cX}: \cM\rightarrow \cX$ is Lipschitz. Let $\{(Q_k,\phi_k)\}_{k \in \mathcal{K}}$ be an atlas of $\cM$. For any given chart $(U_k,\phi_k)$ in this atlas, there exists a constant $C$ such that for any $\zb_1,\zb_2\in \phi_k(U)$, 
        $$\|D_{\cX}\circ\phi_k^{-1}(\zb_1)-D_{\cX}\circ\phi_k^{-1}(\zb_2)\|_{L^{\infty}(\Omega_{\cX})}\leq C\|\zb_1-\zb_2\|_2.$$
	\end{enumerate}
	
\end{assumption}


Assumption \ref{assum.cX} assumes that $\cX$ processes a low-dimensional nonlinear structure. Assumption \ref{assum.cX}(i) imposes an one-to-one correspondence between each function in $\cX$ and their discretized counterpart.  Assumption \ref{assum.cX}(i) holds when the functions in $\cX$ are bandlimited and the discretization grid is sufficiently fine.
Assumptions \ref{assum.cX} (ii)-(iii) assumes that $\cX$ exhibits a low-dimensional non-linear structure.

Our last assumption is about the target operator $\Psi$:
\begin{assumption}\label{assum.Psi}
	Suppose that the operator $\Psi$ is Lipschitz with Lipschitz constant $L_{\Psi}$:
	$$
	\|\Psi(u_1)-\Psi(u_2)\|_{\cY}\leq L_{\Psi}\|u_1-u_2\|_{\cX}.
	$$
\end{assumption}
 Assumption \ref{assum.Psi} imposes a Lipschitz property of the target operator $\Psi$, which is common in the operator learning literature. 

\subsection{Approximation theory}
Our first theoretical result is on the approximation power of C2BNet for Lipschitz operators under  Assumptions \ref{assum.norm}-\ref{assum.Psi}.
\begin{theorem}\label{thm.approx}
	Let $B_1,B_2,M_1,M_2,L_{\Psi}>0$, and suppose Assumptions \ref{assum.norm}-\ref{assum.cX} hold. For any $\varepsilon>0$, there exists a network architecture $\cF_{coef}=\cF_{\rm NN}(D_1,d_2,L,p,K,\kappa,R)$ with
	\begin{align}
			&L=O\left(\log\frac{1}{\varepsilon}\right), \ p=O(\varepsilon^{-d_1}),\ K=O\left(\varepsilon^{-d_1}\log \frac{1}{\varepsilon} + D_1\log \frac{1}{\varepsilon} +D_1\log D_1+D_2\right), \nonumber\\
		&\kappa=O(\varepsilon^{-1}), \ R=M_2|\Omega_{\cY}|,
		\label{eq.approx.architecture}
	\end{align}
    and a linear network $\cF_{basis}=\cF_{\rm NN}(d_2,D_2,1,D_2,d_2D_2,M_2,M_2)$
such that for any operator $\Psi$ satisfying Assumption \ref{assum.Psi}, such an architecture gives rise to $\widetilde{\fb}_{coef}\in \cF_{coef}, \widetilde{\fb}_{basis}\in \cF_{basis}$ and $\widetilde{\Psi}=\widetilde{\fb}_{basis} \circ \widetilde{\fb}_{coef}$ with
\begin{align}
	\sup_{u\in \cX}\|S_{\cY}\circ\proj\circ\Psi(u)-\widetilde{\Psi}\circ S_{\cX}(u)\|_{\infty}\leq \varepsilon.
\end{align}
The constant hidden in $O$ depends on $d_1,d_2,M_1,M_2, |\Omega_{\cX}|,|\Omega_{\cY}|,L_{\Psi}, \tau$ and the surface area of $\cM$. 
\end{theorem}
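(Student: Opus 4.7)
The plan is to exploit the linear decomposition from Assumption \ref{assum.cY} and reduce the operator approximation to a coefficient approximation problem on a low-dimensional manifold. First I would note
$$S_{\cY}\circ\proj\circ\Psi(u) \;=\; \sum_{k=1}^{d_2} \alpha^v_k(\Psi(u))\, S_{\cY}(\omega_k),$$
so that the map $\alphab\mapsto \sum_k \alpha_k S_{\cY}(\omega_k)$ is exactly linear. I would take $\widetilde{\fb}_{basis}$ to be the single linear layer whose weight matrix has columns $S_{\cY}(\omega_k)$; this lives in $\cF_{basis}$ (with the stated $d_2 D_2$ sparsity and $M_2$ weight bound, up to a harmless rescaling), and is exact. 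Hence the entire approximation error will be funneled into the coefficient network.

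Next I would verify that each scalar functional
$$g_k(\ub)\;:=\;\alpha^v_k\bigl(\Psi(D_{\cX}(\ub))\bigr),\qquad k=1,\ldots,d_2,$$
is Lipschitz on $\cM = S_{\cX}(\cX)$, by chaining the Lipschitz continuity of $D_{\cX}$ (Assumption \ref{assum.cX}(iii)), of $\Psi$ (Assumption \ref{assum.Psi}), and Cauchy--Schwarz with $\|\omega_k\|_{\cY}=1$. The pointwise bound $|\alpha^v_k|\leq M_2|\Omega_{\cY}|^{1/2}$ (from Assumption \ref{assum.boundedness}) explains the output clip $R=M_2|\Omega_{\cY}|$. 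A triangle-inequality estimate then gives
$$\bigl\|S_{\cY}\circ\proj\circ\Psi(u)-\widetilde{\Psi}\circ S_{\cX}(u)\bigr\|_\infty \;\leq\; \sum_{k=1}^{d_2} \bigl|g_k(\ub)-(\widetilde{\fb}_{coef}(\ub))_k\bigr|\cdot \|S_{\cY}(\omega_k)\|_\infty,$$
so it suffices to approximate each $g_k$ uniformly on $\cM$ to tolerance of order $\varepsilon/d_2$.

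The heart of the proof is therefore constructing a ReLU network $\widetilde{\fb}_{coef}:\RR^{D_1}\to\RR^{d_2}$ that approximates $d_2$ Lipschitz functions on a $d_1$-dimensional submanifold of $\RR^{D_1}$ at a rate driven by $d_1$, not $D_1$. I would follow the standard chart-based construction: (i) cover $\cM$ by a finite atlas $\{(Q_k,\phi_k)\}_{k\in\cK}$, with $|\cK|$ controlled by the surface area of $\cM$ and the reach $\tau>0$ through a volume packing argument; (ii) build a ReLU partition of unity $\{\rho_k\}$ subordinate to this cover using bump-like compositions of ReLUs, at the cost of $O(D_1)$ nonzero weights per chart test (accounting for the $D_1\log(1/\varepsilon)+D_1\log D_1$ ambient term in $K$); (iii) replace each $\phi_k$ by a linear projection $V_k$ onto the tangent space at the chart center, whose distortion on $Q_k$ is $O(1/\tau)$; (iv) on each local image $V_k(Q_k)\subset\RR^{d_1}$, approximate the pullback of $g_k$, which is Lipschitz on a bounded Euclidean set, by a Yarotsky-type ReLU network of depth $O(\log(1/\varepsilon))$, width $O(\varepsilon^{-d_1})$, and $O(\varepsilon^{-d_1}\log(1/\varepsilon))$ nonzero weights with entries bounded by $O(\varepsilon^{-1})$. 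Summing these local approximations weighted by $\rho_k$ through an approximate-product gadget produces $\widetilde{\fb}_{coef}$ meeting the advertised $(L,p,K,\kappa,R)$ budgets.

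The hard part will be the bookkeeping that converts local Yarotsky approximation on $\RR^{d_1}$ into a uniform bound on the manifold. I will need to show that (a) chart distortion from the tangent projection and (b) partition-of-unity cross-terms near chart overlaps each contribute to the final error only up to constants depending on $\tau$, $L_{\Psi}$ and the atlas Lipschitz constant, so that the claimed scaling $\varepsilon^{-d_1}$ in $p$ and $K$ is not contaminated by $D_1$; the only ambient contributions should be the $D_1$-linear terms from reading off $\ub$ and testing chart membership. Collecting the per-coefficient error $\varepsilon/d_2$, multiplying by $\|S_{\cY}(\omega_k)\|_\infty\leq M_2$, and summing over $k$ then closes the bound.
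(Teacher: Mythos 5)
Your proposal follows essentially the same route as the paper: the same decomposition $S_{\cY}\circ\proj\circ\Psi(u)=\sum_k \alpha_k^v\circ\Psi\circ D_{\cX}(\ub)\,S_{\cY}(\omega_k)$, the same exact linear basis layer, the same Lipschitz reduction for the coefficient functionals (the paper's Lemma \ref{lem.alpha}), and the same $d_2 M_2\varepsilon_1$ triangle-inequality bookkeeping. The only difference is that where you sketch the chart/partition-of-unity/Yarotsky construction for approximating Lipschitz functions on a low-dimensional manifold, the paper simply cites Theorem 3.1 of \cite{chen2022nonparametric} as a black box, so your outline is correct and matches the paper's argument.
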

Theorem \ref{thm.approx} is proved in Appendix \ref{thm.approx.proof}. Theorem \ref{thm.approx} shows that if the network architecture is properly choosen, C2BNet can universally approximate Lipschitz operators after a linear projection with an arbitrary accuracy. Furthermore, the size of the network scales with $\varepsilon$ with an exponent depending on the intrinsic dimension $d_1$ of the input functions , instead of the ambient dimension $D_1$.

Our C2BNet in Theorem \ref{thm.approx} is constructed to approximate $\proj\circ\Psi$. When the outputs of $\Psi$ approximately lie in a low-dimensional subspace as assumed in Assumption \ref{assum.cY}, we can
combine Theorem \ref{thm.approx} and Assumption \ref{assum.cY} to guarantee an approximation error of $\Psi$.
\begin{corollary}\label{coro.approx}
    Let $B_1,B_2,M_1,M_2,L_{\Psi},\zeta>0$, and suppose that Assumptions \ref{assum.cY}-\ref{assum.cX} hold. For any $\varepsilon>0$, there exists a network architecture $\cF_{coef}=\cF_{\rm NN}(D_1,d_2,L,p,K,\kappa,R)$ with
	\begin{align}
			&L=O\left(\log\frac{1}{\varepsilon}\right), \ p=O(\varepsilon^{-d_1}),\ K=O\left(\varepsilon^{-d_1}\log \frac{1}{\varepsilon} + D_1\log \frac{1}{\varepsilon} +D_1\log D_1+D_2\right), \nonumber\\
		&\kappa=O(\varepsilon^{-1}), \ R=M_2|\Omega_{\cY}|,
		\label{eq.approx.architecture}
	\end{align}
    and a linear network $\cF_{basis}=\cF_{\rm NN}(d_2,D_2,1,D_2,d_2D_2,M_2,M_2)$
such that for any operator $\Psi$ satisfying Assumption \ref{assum.Psi}, such an architecture gives rise to $\widetilde{\fb}_{coef}\in \cF_{coef}, \widetilde{\fb}_{basis}\in \cF_{basis}$ and $\widetilde\Psi=\widetilde{\fb}_{basis} \circ \widetilde{\fb}_{coef}$ with
\begin{align}
	\sup_{u\in \cX}\|S_{\cY}\circ\Psi(u)-\widetilde\Psi\circ S_{\cX}(u)\|_{\infty}\leq \zeta+\varepsilon.
\end{align}
The constant hidden in $O$ depends on $d_1,d_2,M_1,M_2, |\Omega_{\cX}|,|\Omega_{\cY}|,L_{\Psi},\tau$ and the surface area of $\cM$.
\end{corollary}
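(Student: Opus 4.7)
\medskip
\noindent\textbf{Proof proposal for Corollary \ref{coro.approx}.}
The plan is to reduce Corollary \ref{coro.approx} directly to Theorem \ref{thm.approx} by inserting the projection $\proj$ as an intermediary and absorbing the projection error via Assumption \ref{assum.cY}. Concretely, I would invoke Theorem \ref{thm.approx} with the same $\varepsilon$ to obtain a coefficient network $\widetilde{\fb}_{coef}\in\cF_{coef}$, a basis network $\widetilde{\fb}_{basis}\in\cF_{basis}$ with the architecture specified in \eqref{eq.approx.architecture}, and $\widetilde{\Psi}=\widetilde{\fb}_{basis}\circ\widetilde{\fb}_{coef}$ satisfying
\[
\sup_{u\in\cX}\|S_{\cY}\circ\proj\circ\Psi(u)-\widetilde{\Psi}\circ S_{\cX}(u)\|_{\infty}\leq \varepsilon.
\]
This handles one of the two error contributions, and crucially the architecture is exactly the one claimed in the statement of the corollary.

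Next I would control the projection error $S_{\cY}\circ\Psi(u)-S_{\cY}\circ\proj\circ\Psi(u)$ in the $\ell^\infty$ norm. By Assumption \ref{assum.cY} applied to $v=\Psi(u)\in\cY$, we have
\[
\|\Psi(u)-\proj\circ\Psi(u)\|_{L^{\infty}(\Omega_{\cY})}\leq \zeta.
\]
The discretization operator $S_{\cY}$ produces a vector of values associated with grid points (or weighted averages) in $\Omega_{\cY}$, so each component of $S_{\cY}(w)$ is bounded in absolute value by $\|w\|_{L^{\infty}(\Omega_{\cY})}$. Hence by linearity of $S_{\cY}$,
\[
\|S_{\cY}\circ\Psi(u)-S_{\cY}\circ\proj\circ\Psi(u)\|_{\infty}\leq \|\Psi(u)-\proj\circ\Psi(u)\|_{L^{\infty}(\Omega_{\cY})}\leq \zeta.
\]

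Finally I would combine the two estimates by the triangle inequality:
\[
\|S_{\cY}\circ\Psi(u)-\widetilde{\Psi}\circ S_{\cX}(u)\|_{\infty}\leq \|S_{\cY}\circ\Psi(u)-S_{\cY}\circ\proj\circ\Psi(u)\|_{\infty}+\|S_{\cY}\circ\proj\circ\Psi(u)-\widetilde{\Psi}\circ S_{\cX}(u)\|_{\infty}\leq \zeta+\varepsilon,
\]
and take the supremum over $u\in\cX$. The main (and essentially only) obstacle is the minor technical point of converting the continuous $L^\infty$ bound from Assumption \ref{assum.cY} into a discrete $\ell^\infty$ bound through $S_{\cY}$; this is immediate if $S_{\cY}$ is pointwise evaluation, and more generally it is a standard property of any reasonable sampling-type discretization whose rows are probability (or sub-probability) measures, which is the setting already implicit in the inner-product formula \eqref{eq.inner.dis}. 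Everything else is a direct citation of Theorem \ref{thm.approx} plus the triangle inequality, so no new architectural construction is required.
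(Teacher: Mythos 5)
Your proposal is correct and follows essentially the same route as the paper's own proof: invoke Theorem \ref{thm.approx} for the $\varepsilon$ term, bound the projection error by $\zeta$ via Assumption \ref{assum.cY}, and combine with the triangle inequality. Your explicit justification that the discrete $\ell^\infty$ norm of $S_{\cY}(w)$ is controlled by $\|w\|_{L^\infty(\Omega_{\cY})}$ is a small detail the paper leaves implicit, but it does not change the argument.
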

Corollary \ref{coro.approx} is proved in Appendix \ref{coro.approx.proof}. Corollary \ref{coro.approx} gives an upper bound of C2BNet for approximating the original Lipschitz operators. The bound depends on the approximation error by projection and the approximation error by networks.

\subsection{Generalization error}
Suppose we are given input-output pairs $\cS=\{u_i,\widehat{v}_i\}_{i=1}^n$ with $u_i\in \cX$ sampled from a probability distribution $\rho$ and 
$$
\widehat{v}_i=\Psi(u_i)+\xi_i,
$$
where $\xi_i$'s represent i.i.d.  noise satisfying the following assumption:
\begin{assumption}
	\label{assum.noise}
    Suppose $\xi_i$'s are i.i.d. copies of $\xi \in \cY$.
	For any $\yb\in \Omega_{\cY}$, $\xi_i(\yb)$ follows a sub-Gaussian distribution with variance proxy $\sigma^2$.
\end{assumption}
Given a network architecture $\cF_{coef}$ and $\cF_{basis}$, we define the C2BNet class
\begin{align}
    \cF=\{\fb_{basis}\circ \fb_{coef}: \fb_{coef} \in \cF_{coef}, \fb_{basis}\in \cF_{basis}\}.
    \label{eq.cF}
\end{align}
Given the training data $\{\ub_i,\widehat\vb_i\}_{i=1}^n$ with $\ub_i=S_{\cX}(u_i),\widehat{\vb}_i=S_{\cY}(\widehat{v}_i)$, we train C2BNet through the following empirical risk minimization:
\begin{align}
	\widehat{\Psi}=\argmin_{\Psi_{\rm NN}\in \cF} \frac{1}{n}\sum_{i=1}^n \|\Psi_{\rm NN}(\ub_i)-\widehat{\vb}_i\|_{S_{\cY}}^2,
	\label{eq.loss}
\end{align}

The generalization error of $\widehat{\Psi}$ satisfies the following upper bound.
\begin{theorem}\label{thm.generalization}
	Let $B_1,B_2,M_1,M_2,L_{\Psi},\zeta>0$. Suppose Assumptions \ref{assum.cY}-\ref{assum.noise} hold. 
    Set the network architecture $\cF_{coef}=\cF_{\rm NN}(D_1,d_2,L,p,K,\kappa,R),\cF_{basis}=\cF_{\rm NN}(d_2,D_2,1,D_2,d_2D_2,M_2,M_2) $ with
	\begin{align}
		&L=O(\log n), \ p=O(n^{\frac{2d_1}{2+d_1}}), \ K=O(n^{\frac{2d_1}{2+d_1}}\log n + D_1\log n + D_1\log D_1+D_2), \nonumber\\
		&\kappa=O(\log n), \ R=M_2|\Omega_{\cY}|,
		\label{eq.gene.architecture}
	\end{align}
    and consider the C2BNet class $\cF$  defined in (\ref{eq.cF}).
	The minimizer of (\ref{eq.loss}), denoted by $\widehat{\Psi}$, satisfies
	\begin{align}
		\EE_{\cS}\EE_{u\sim \rho}  \left[\frac{1}{D_2}\|\widehat{\Psi}\circ S_{\cX}(u)-S_{\cY}\circ\Psi(u)\|_2^2\right]\leq CD_1(\log D_1)n^{-\frac{2}{2+d_1}}\log^3n+ 8\zeta^2,
	\end{align}
	where $C$ is a constant. Both $C$ and the constants hidden in $O$ depend on $d_1,d_2,M_1,M_2, |\Omega_{\cX}|,|\Omega_{\cY}|, $ $ L_{\Psi}, $ $ \sigma,\tau$ and the surface area of $\cM$. 
\end{theorem}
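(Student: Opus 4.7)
The plan is to carry out a standard bias--variance decomposition for empirical risk minimization on a neural network class, invoke the approximation result from Theorem~\ref{thm.approx} for the bias, control the variance through covering numbers of $\cF$ combined with sub-Gaussian concentration, and finally balance the two by choosing the architecture parameters as a function of $n$.

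First, I would split the error into three pieces by the triangle inequality:
\begin{align*}
\EE_u \tfrac{1}{D_2}\|\widehat\Psi S_{\cX}u - S_{\cY}\Psi u\|_2^2 \le 2\,\EE_u \tfrac{1}{D_2}\|\widehat\Psi S_{\cX}u - S_{\cY}\proj\Psi u\|_2^2 + 2\,\EE_u \tfrac{1}{D_2}\|S_{\cY}\proj\Psi u - S_{\cY}\Psi u\|_2^2.
\end{align*}
The second term is immediately bounded by $O(\zeta^2)$ using Assumption~\ref{assum.cY} and Assumption~\ref{assum.norm}, producing the $8\zeta^2$ contribution. The remaining work is on the first term, which measures how well $\widehat\Psi$ captures the projected operator $\proj\circ\Psi$.

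Second, for the oracle step, I would pick $\Psi^\star\in\cF$ produced by Theorem~\ref{thm.approx} with tolerance $\varepsilon>0$ to be tuned, so that $\sup_{u\in\cX}\|S_{\cY}\proj\Psi u-\Psi^\star S_{\cX}u\|_\infty\le\varepsilon$. Using $\widehat\vb_i=S_{\cY}\Psi(u_i)+S_{\cY}\xi_i$ and optimality of $\widehat\Psi$ for the empirical loss, expanding $\tfrac{1}{n}\sum_i(\|\widehat\Psi\ub_i-\widehat\vb_i\|_{S_{\cY}}^2-\|\Psi^\star\ub_i-\widehat\vb_i\|_{S_{\cY}}^2)\le 0$ yields the oracle inequality
\begin{align*}
\tfrac{1}{n}\sum_i\|\widehat\Psi(\ub_i)-S_{\cY}\Psi(u_i)\|_{S_{\cY}}^2 \le \tfrac{1}{n}\sum_i\|\Psi^\star(\ub_i)-S_{\cY}\Psi(u_i)\|_{S_{\cY}}^2 + \tfrac{2}{n}\sum_i\langle\widehat\Psi(\ub_i)-\Psi^\star(\ub_i),S_{\cY}\xi_i\rangle_{S_{\cY}}.
\end{align*}
The first right-hand term is $O(\varepsilon^2)$ by the sup-norm approximation bound; the cross term with noise is the stochastic contribution.

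Third, I would control the stochastic term by a covering-number/chaining argument over $\cF=\cF_{basis}\circ\cF_{coef}$. Since $\cF_{coef}$ is a ReLU network with $K$ nonzero weights bounded by $\kappa$ and $\cF_{basis}$ is linear with $d_2D_2$ weights bounded by $M_2$, a standard Lipschitz bookkeeping (see, e.g., \cite{chen2022nonparametric,liu2025generalization}) gives $\log\cN(\delta,\cF,\|\cdot\|_\infty)=O((K+d_2D_2)\log(K\kappa D_1 D_2/\delta))$. Truncating the sub-Gaussian noise at level $O(\sigma\sqrt{\log n})$ (the tail event contributes negligibly) and applying a Bernstein-type peeling inequality produces
\begin{align*}
\EE_{\cS}\,\EE_u\,\tfrac{1}{D_2}\|\widehat\Psi(\ub)-S_{\cY}\Psi(u)\|_2^2 \lesssim \varepsilon^2 + \frac{(K+d_2D_2)\,\sigma^2\log^2 n}{n},
\end{align*}
after also passing from the empirical norm $\|\cdot\|_{n}$ to the population norm $\|\cdot\|_{L^2(\rho)}$ by another uniform concentration step on the bounded, truncated class $\cF$.

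Finally, I would balance $\varepsilon$. With $K=O(\varepsilon^{-d_1}\log(1/\varepsilon)+D_1\log(1/\varepsilon)+D_1\log D_1+D_2)$ from Theorem~\ref{thm.approx}, choosing $\varepsilon^2\asymp n^{-2/(2+d_1)}$ equates the two terms and reproduces the stated scalings $p,K=O(n^{2d_1/(2+d_1)}\log n+\ldots)$ and $L=O(\log n)$. After plugging in, the stochastic term becomes $O(n^{-2/(2+d_1)}\log^3 n)$ and the ambient dimension enters only as $D_1\log D_1$, exactly as in the claim. The hardest step is the last part of the third paragraph: transferring from the empirical to the population $L^2$ norm uniformly over $\cF$ while keeping only the intrinsic dimension $d_1$ in the polynomial rate and forcing $D_1$ to appear only logarithmically. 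This demands covering the manifold $\cM$ by local charts and gluing local linear approximations of $\fb_{coef}$ so that the effective metric entropy scales as $\varepsilon^{-d_1}$ rather than $\varepsilon^{-D_1}$, exactly mirroring the chart-by-chart construction underlying Theorem~\ref{thm.approx}; the additional $D_1\log D_1$ in the sparsity is precisely the cost of implementing the ambient-to-chart reduction inside the network.
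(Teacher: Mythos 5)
Your proposal is correct and follows essentially the same route as the paper: a bias--variance/oracle decomposition with the $\zeta^2$ projection error split off, the network from Theorem~\ref{thm.approx} serving as the oracle, covering-number plus sub-Gaussian control of the noise cross term and of the empirical-to-population gap, and the final balance $\varepsilon^2\asymp n^{-2/(2+d_1)}$. One small clarification: the intrinsic dimension enters the concentration step only through the parameter count $K=O(\varepsilon^{-d_1}\log(1/\varepsilon)+\cdots)$ of the network class (via the covering-number bound of Lemma~\ref{lem.covering}), so no chart-by-chart covering of $\cM$ is needed in the empirical-to-population transfer --- the manifold structure is used exclusively inside the approximation theorem.
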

Theorem \ref{thm.generalization} is proved in Appendix \ref{thm.generalization.proof}. The error bound in Theorem \ref{thm.generalization} has two components: the first one is the network estimation error, which converges to zero as $n$ goes to infinity; the second represents the loss of information while output functions are projected to a $d_2$-dimensional subspace. If functions in $\cY$ lie on a $d_2$-dimensional subspace, then $\zeta =0$ and the squared generalization error in Theorem \ref{thm.generalization} converges at the rate $n^{-\frac{2}{2+d_1}}$. This  rate of convergence depends  on the intrinsic dimension $d_1$ instead of the ambient dimension $D_1$. By solving (\ref{eq.loss}), C2BNet can adapt to low-dimensional structures in input functions and enjoys a fast convergence rate depending on the intrinsic dimension of the input functions, attenuating the curse of dimensionality. If $\cY$ is not exactly lying in a $d_2$-dimensional subspace, the projection error usually decays as $d_2$ increases. We can choose $d_2$ to be large enough so that $\zeta$ is small.

\section{Numerical experiments}
\label{sec.experiments}
In this section, we demonstrate the effectiveness of C2BNet through three sets of numerical experiments. 
To show the effectiveness and robustness of the methods, we use the same network structures in all examples. Specifically, the network has the size $d_{input}\rightarrow 100\rightarrow 100 \rightarrow 100 \rightarrow d_{low} \rightarrow d_{output}$, where the number indicates the width of each layer (the number of neurons of each layer), and each layer is activated by the ReLU and with bias except the last layer.
The $d_{input}, d_{output}$ and $d_{low}$ are problem-dependent and are specified in each example.
All codes and code running results will be published on Google Colab and Github after the paper is published; we can provide the link if requested by the readers.

\subsection{Radiative transfer equation}
In this example, we examine the radiative transfer equation (RTE) \citep{lai2019inverse, newton2020diffusive,li2019diffusion} with a contrast scattering parameter \(\sigma(x)\), the goal is to infer the inverse QoIs $\sigma(x)$ given the solution of the PDE. 
The RTE is a fundamental model in optical tomography and represents a classic inverse problem. The objective is to recover the scattering parameter based on given observations. The RTE is described by the following equation:  
\[
s \cdot \nabla I(x,s) = \frac{\sigma(x, \omega)}{\epsilon} \bigg( \int_{\mathcal{S}^{n-1}} I(x,s') \, ds' - I(x, s) \bigg), \quad \forall x \in D, \, s \in \mathcal{S}^{n-1}.
\]
Here, \(s\) is a vector in the unit sphere \(\mathcal{S}^{n-1}\), and \(n\) denotes the spatial dimension of the problem.  

In our experiments, we set \(n=2\), making \(\mathcal{S}^{n-1} = \mathcal{S}^1\), which corresponds to the unit circle. Additionally, we set \(\epsilon = 1\) and the domain \(D = [0, 1]^2\).  

To close the model, Dirichlet boundary conditions are imposed. Specifically, for directions entering the domain (\(s \cdot \textbf{n} < 0\)), the boundary condition is defined as \(I(x,s) = I_{\text{in}}\). This applies to the boundary subset:  
\[
\Gamma^- := \{ (x, s) \in \partial D \times \mathcal{S}^{n-1} : s \cdot \textbf{n} < 0 \},
\]
where \(\textbf{n}\) is the unit outward normal vector at \(x \in \partial D\). The condition is expressed as:  
\[
I(x,s) = I_{\text{in}}(x,s) \quad \text{for all } (x, s) \in \Gamma^-.
\]
In our examples, the top, bottom and right boundaries are assigned zero incoming boundary conditions, while the left boundary is assumed to have a non-zero flow, injecting energy or particles into the domain.

The term `contrast scattering' refers to the difference in scattering properties between specific channels and the surrounding background. Each scattering includes five channels, where the size of each channel is randomly determined by two free variables: length and width. 
Figure \ref{fig_rte_scatter} illustrates four realizations of the scattering parameter \(\sigma\), highlighting the variability in channel structures across different samples.
\begin{figure}[H]
    \centering
    \includegraphics[scale = 0.4]{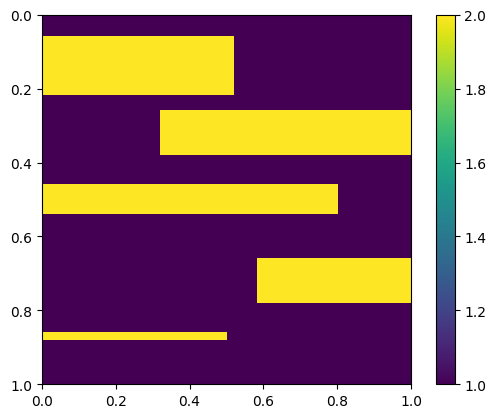}
    \includegraphics[scale = 0.4]{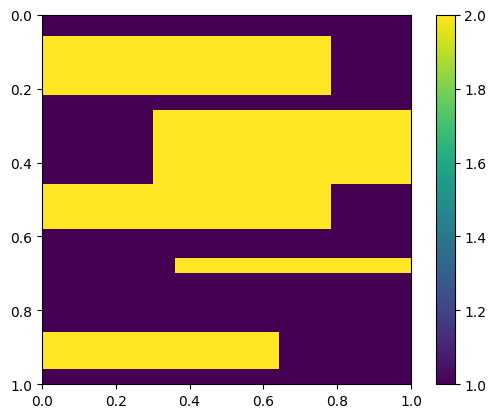}
    \caption{Demonstration of two scatter parameters for RTE. Each scattering field has 5 channels and each channels has two degrees of freedom width and height.}
    \label{fig_rte_scatter}
\end{figure}

\subsubsection{Results for RTE}
In this section, we present the numerical results. 
For input, the solution (observation) is uniformly discretized on a \(11 \times 11\) spatial mesh, with 4 velocity directions uniformly distributed on the unit circle. The inverse QoIs correspond to the target scattering parameter, which is discretized on a \(10 \times 10\) spatial mesh,  $d_{low} = 50$ for this example.
In Figure \ref{fig_rte_results}, we present the error decay in the \(L_1\) norm with respect to the number of training samples.  
From the left panel of the figure, we observe that the relative error decreases to as low as \(3\%\) when \(144\) training samples are used, demonstrating the effectiveness of the neural network. Moreover, the right panel shows the error decay in a log scale linearly, validating the power-law behavior shown in Theorem \ref{thm.generalization}.

\begin{figure}[H]
    \centering
    \includegraphics[scale = 0.4]{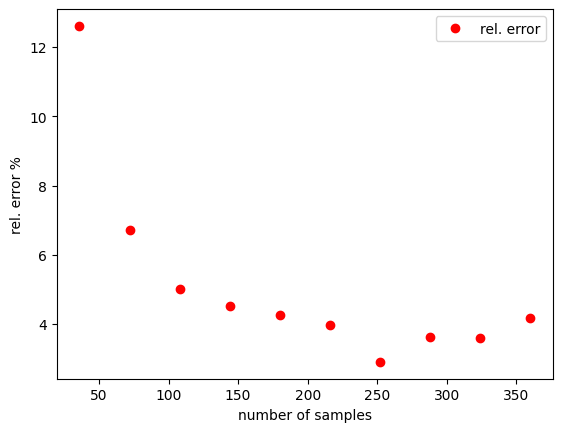}
    \includegraphics[scale = 0.4]{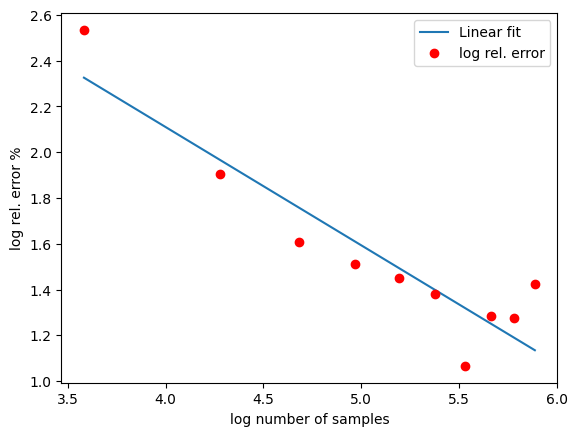}
    \caption{RTE results. Left: Relative error decay with respect to the number of training samples. Right: Error decay in log-scale. Note a linear line is fit to better demonstrate the linear decay in log.}
    \label{fig_rte_results}
\end{figure}

\subsection{Elliptic equation}
In this example, we consider the elliptic equation,
\begin{align}
    -\nabla\cdot \kappa(x, y) \nabla u = f, x\in[0, 1]^2,
\end{align}
where $\kappa(x)$ is the permeability.
The goal is to recover $\kappa(x, y)$ given the observations.
We set $\kappa(x, y) = w_1\sin(x+y) + w_2\cos(x+y) + w_3\sin(2(x+y)) +  w_4\cos(2(x+y))$+4.1, where $w_i\sim \text{Uniform}(-1, 1)$.
We present two realizations in Figure \ref{fig_pm_kappa}.
\begin{figure}[H]
    \centering
    \includegraphics[scale = 0.4]{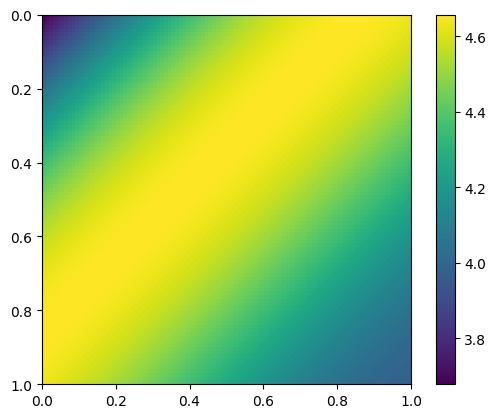}
    \includegraphics[scale = 0.4]{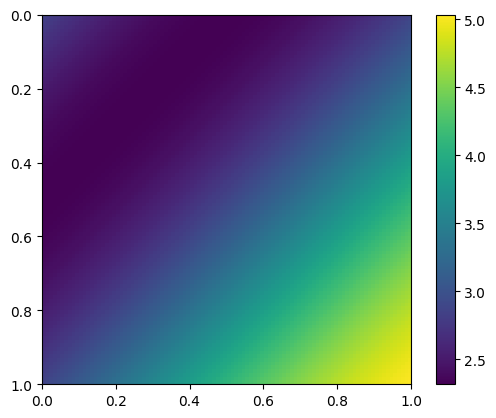}
    \caption{Demonstration of two permeability realizations for elliptic equation. Each permeability is determined by 4 degrees of freedom.}
    \label{fig_pm_kappa}
\end{figure}

\subsubsection{Results for elliptic equation}
\label{sec_result_elliptic}
In this section, we present the numerical results. 
The input observations (solutions of the equation) are uniformly sampled using a \(10 \times 10\) mesh, while the inverse QoIs are also discretized on a \(10 \times 10\) mesh. The prediction error is measured using the \(L_2\) norm, with the results illustrated in Figure \ref{fig_pm_100_original}. 
From the left panel of Figure \ref{fig_pm_100_original}, we observe that the relative \(L_2\) error decreases to \(0.7\%\) with \(360\) training samples and remains below \(1\%\) even with only \(108\) training samples. This demonstrates the accuracy of the network's predictions. 
Furthermore, as shown in the right panel of Figure \ref{fig_pm_100_original}, errors exhibit a linear decay in the logarithmic scale, which validates the theoretical results presented in Theorem \ref{thm.generalization}.
\begin{figure}[H]
    \centering
    \includegraphics[scale = 0.4]{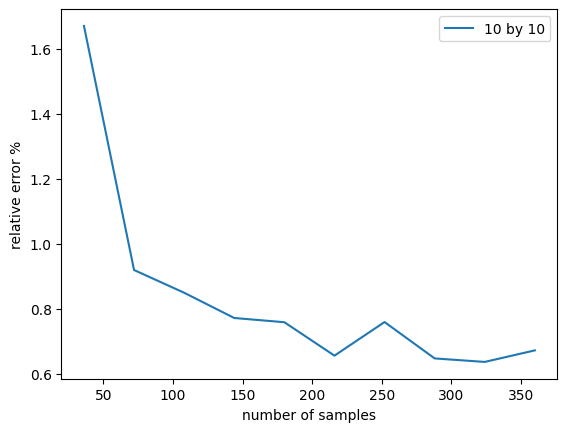}
    \includegraphics[scale = 0.4]{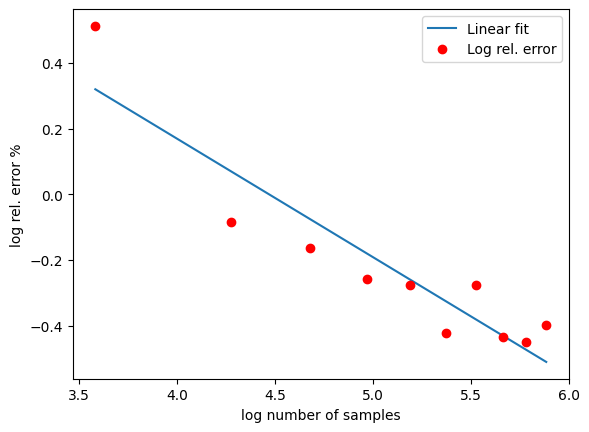}
    \caption{Elliptic example. Left: Error decay with respect to the number of training samples. Right: Error decay in
logarithmic scale. We infer the inverse QoIs, which correspond to the permeability discretized on
a $10\times 10$ mesh. To better illustrate the convergence rate, we fit a linear line to the error in the
logarithmic scale on the right.}
    \label{fig_pm_100_original}
\end{figure}

\subsubsection{Transfer to new QoIs (permeability on a finer mesh)}
We test the performance of C2BNet by modifying the QoIs while keeping the encoding part of the trained network fixed. This approach significantly reduces the computational cost of inferring new QoIs by avoiding the need to train an entirely new network.  
Specifically, the new inverse QoIs correspond to solutions defined on a finer $20 \times 20$ grid, which differs from the $10 \times 10$ training mesh evaluated in Section \ref{sec_result_elliptic}. 
To adapt the model, we retain all but the final layer and train only the last linear layer, which has a dimension of $12 \times 400$ ($d_{low} = 12$). The convergence of errors with respect to the number of training samples is presented in Figure \ref{fig_pm_transfer_400}.

\begin{figure}[H]
    \centering
    \includegraphics[scale = 0.4]{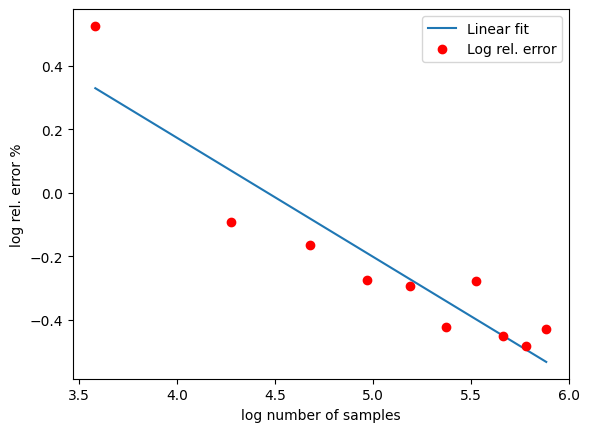}
    \includegraphics[scale = 0.4]{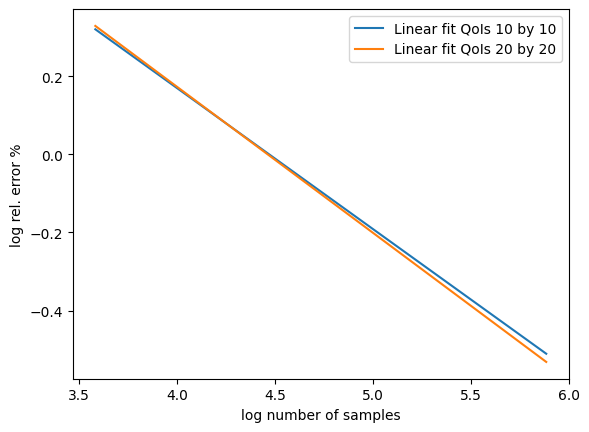}
    \caption{Elliptic example. Left: Error decay in logarithmic scale for the proposed method, which updates only the last linear layer for the downstream task with $20\times 20$-dimensional inverse QoIs. 
    Right: Convergence comparison between the pre-trained network for $10\times 10$-dimensional inverse QoIs and the new network.}
    \label{fig_pm_transfer_400}
\end{figure}

For comparison, we also retrain all layers of the network for the new task, where the QoIs are defined on a \(20 \times 20\) mesh. In particular, the full retraining process requires more cost than the proposed approach, which updates only the final layer.
Specifically, complete retraining requires updating a total of 36,312 parameters, while the proposed method only updates the last layer, which consists of 4,800 parameters.
However, the proposed method achieves relative errors similar to full retraining, demonstrating its effectiveness and accuracy; see Figure \ref{fig_pm_transfer_400_vs_retraining}.

\begin{figure}[H]
    \centering
    \includegraphics[scale = 0.4]{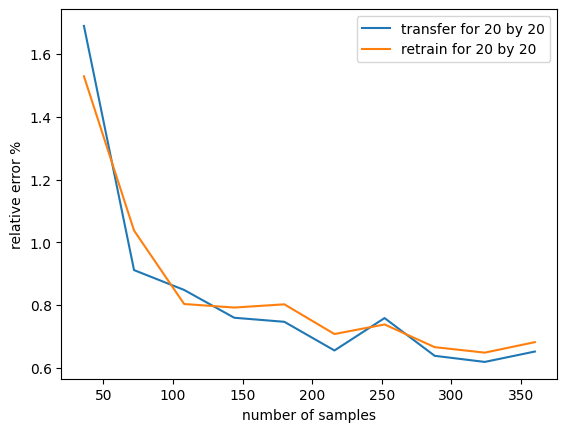}
    \includegraphics[scale = 0.4]{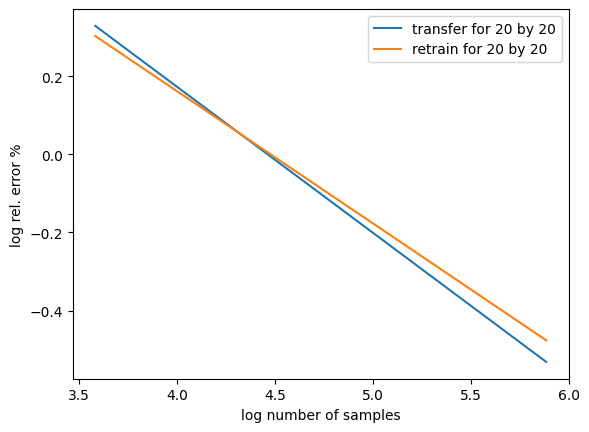}
    \caption{Time-dependent problem. Left: Error comparison between the proposed method, which transfers the trained weights
except for the last layer and updates only the last layer, and the approach of retraining all layers
for the downstream task with $20\times 20$ inverse QoIs. 
Notably, with significantly less trainable parameters, the proposed method achieves a similar prediction error as the full retraining. Right: error decay in log scale and linear fit the log error.}
    \label{fig_pm_transfer_400_vs_retraining}
\end{figure}

\subsection{Time-dependent diffusion equation}
In this example, we consider the parabolic equation,
\begin{align*}
    u_t - u_{xx}  = f, x\in[0, 2], t\in[0, 0.01],
\end{align*}
with Dirichlet boundary conditions. 
The goal is to find the initial condition given the solution at the terminal time. 
The initial condition used has the form: $u(x,0) = \sum_{i = 1}^3 w_i\sin(i\pi x) + p_i\cos(i\pi x)$, where $w_i, p_i$ are the weights uniformly sampled from $[-1, 1]$. We display 4 initial condition realizations in Figure \ref{fig_dr_ic}.

\begin{figure}[H]
    \centering
    \includegraphics[scale = 0.4]{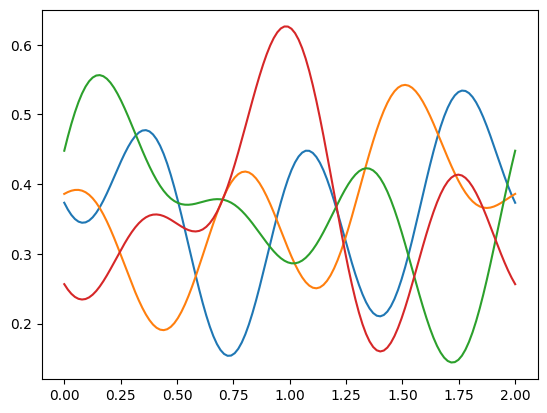}
    \caption{Demonstration of the four initial condition realizations for the time-dependent problems. Each realization is determined by six degrees of freedom.}
    \label{fig_dr_ic}
\end{figure}

\subsubsection{Results for the time-dependent diffusion equation}
The network input (observations) are solutions at $t = 0.01$, and we take $64$ points uniformly from $[0, 2]$, while the inverse QoIs are the initial conditions $u(x, 0)$ discretized by a $64$-equal-distance mesh.
The results are presented in Figure \ref{fig_dr64_original}. As illustrated in the left panel, the network achieves prediction relative errors below \(1\%\) when approximately 225 training samples are used. Furthermore, the error decay is plotted in logarithmic scale with respect to the number of training samples. The observed linear decay confirms the power-law behavior established in the theorem.

\begin{figure}[H]
    \centering
    \includegraphics[scale = 0.4]{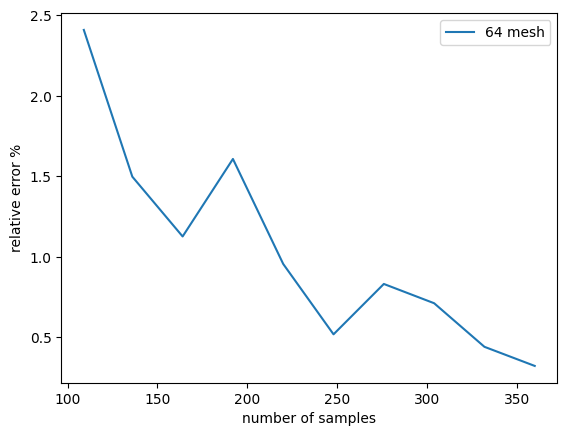}
    \includegraphics[scale = 0.4]{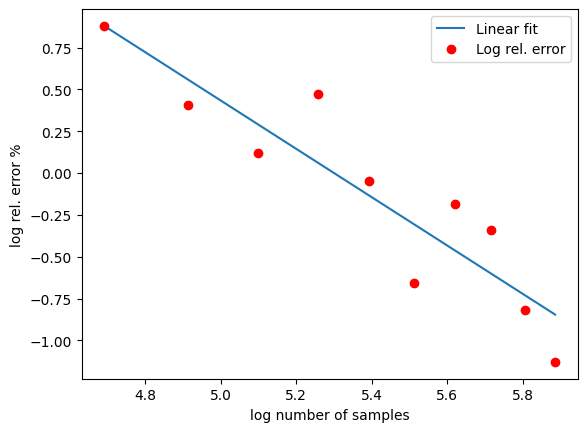}
    \caption{Left: Error decay with respect to the number of training samples. Right: Error decay in logarithmic scale. We infer the inverse QoIs, which correspond to initial conditions discretized on a 64-point mesh. To better illustrate the convergence rate, we fit a linear line to the error in the logarithmic scale on the right.}
    \label{fig_dr64_original}
\end{figure}

\subsubsection{Transfer to new QoIs (solutions on a finer mesh)}
To demonstrate that the latent layer (the second-to-last layer) provides valuable information, as established in the theorem, we consider a new set of QoIs corresponding to initial conditions defined on a denser mesh with 127 grid points.
The proposed method updates only the final linear layer of the network, which has a size of \(20 \times 127\) ($d_{low} = 20$), while keeping all preceding layers unchanged. This approach significantly reduces training costs compared to full retraining. The convergence results are presented in Figure \ref{fig_dr_transfer_127}.
From the left panel of the figure, we observe a linear decay when applying a logarithmic scale to both the error and the number of training samples. The right panel illustrates a close match in convergence rates between the pre-trained network, which predicts the original 64-point discretized initial condition, and the new network, where only a single linear layer is retrained for the updated QoIs.

\begin{figure}[H]
    \centering
    \includegraphics[scale = 0.4]{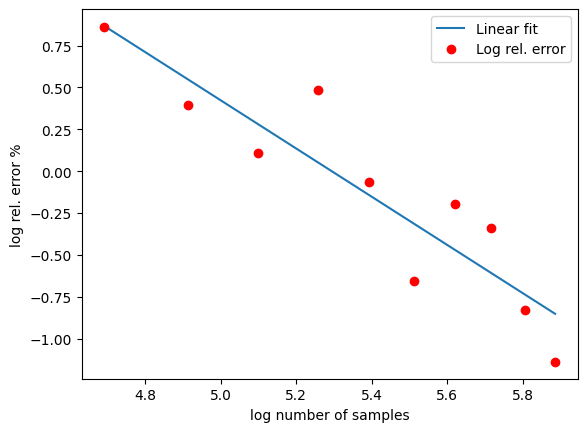}
    \includegraphics[scale = 0.4]{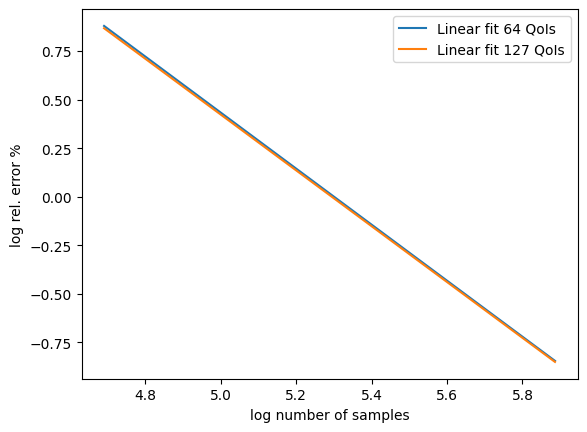}
    \caption{Left: Error decay in logarithmic scale for the proposed method, which updates only the last linear layer for the downstream task with 127-dimensional inverse QoIs. Right: Convergence comparison between the pre-trained network for 64-dimensional inverse QoIs and the new network.}
    \label{fig_dr_transfer_127}
\end{figure}

For comparison, we evaluate the proposed method against the approach of retraining all layers for the new tasks. The total number of trainable parameters in the full retraining method is 312,60, approximately 12.31 times larger than the proposed method with only 2,540 parameters, which only tunes the last linear layer. Despite significant savings in training cost, the accuracy of the proposed method remains unchanged. See Figure \ref{fig_dr_transfer_127vs_retraining} for the comparison.

\begin{figure}[H]
    \centering
    \includegraphics[scale = 0.4]{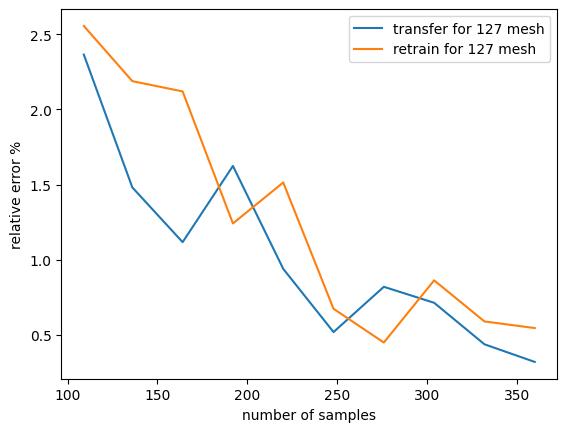}
    \includegraphics[scale = 0.4]{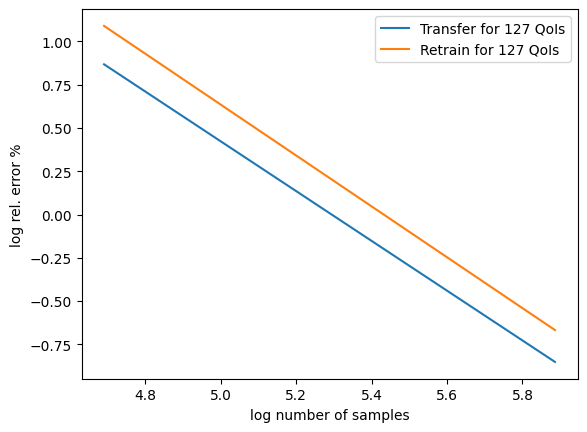}
    \caption{Time-dependent problem. Left: Error comparison between the proposed method, which transfers the trained weights except for the last layer and updates only the last layer, and the approach of retraining all layers for the downstream task with 127-dimensional inverse QoIs. 
    Notably, the full retraining involves approximately 12 times more trainable parameters than the proposed method; however, the prediction errors behave similarly. Right: error decay in log scale and linear fit the log error.}
    \label{fig_dr_transfer_127vs_retraining}
\end{figure}

\section{Conclusion}
\label{sec.conclusion}
In this work, we introduced C2BNet to solve inverse PDE problems within the operator learning paradigm. C2BNet consists of two components, a coefficient network and a linear basis network. C2BNet efficiently adapts to different discretizations through fine-tuning, leveraging a pretrained model to significantly reduce computational costs while maintaining high accuracy. We also established approximation and generalization error bounds for learning Lipschitz operators using C2BNet. Our theories show that C2BNet is adaptive to   low-dimensional data structures and achieves a fast convergence rate  depending on the intrinsic dimension of the dataset. The efficacy of C2BNet is demonstrated by systematic numerical experiments. The results confirm that C2BNet achieves a strong balance between computational efficiency and accuracy. These findings highlight the potential of C2BNet as a powerful and scalable tool to solve inverse problems in scientific computing and engineering applications.

\section*{Acknowledgment}
Hao Liu acknowledges the National Natural Science Foundation of China under the 12201530, and HKRGC ECS 22302123.
Wenjing Liao acknowledges the National Science Foundation under the NSF DMS 2145167 and the U.S. Department of Energy under the DOE SC0024348.
Guang Lin acknowledges the National Science Foundation under grants DMS-2053746, DMS-2134209, ECCS-2328241, CBET-2347401, and OAC-2311848. The U.S. Department of Energy also supports this work through the Office of Science Advanced Scientific Computing Research program (DE-SC0023161) and the Office of Fusion Energy Sciences (DE-SC0024583).

\bibliographystyle{ims}
\bibliography{ref}

\newpage
\appendix
\section*{Appendix}
\section{Proof of main results}
\label{sec.mainproof}
\subsection{Proof of Theorem \ref{thm.approx}}
\label{thm.approx.proof}
\begin{proof}[Proof of Theorem \ref{thm.approx}]
	We have
	\begin{align}
		S_{\cY}\circ \proj\circ\Psi(u)=&S_{\cY}\left(\sum_{k=1}^{d_2} \alpha_k^v\circ \Psi(u)\omega_k\right)=\sum_{k=1}^{d_2} \alpha_k^v\circ \Psi\circ D_{\cX}(\ub)S_{\cY}(\omega_k),
        \label{them.proof.error1}
	\end{align}
which is a linear combination of $S_{\cY}(\omega_k)$ with weight $\alpha_k^v\circ \Psi\circ D_{\cX}(\ub)$. Here we denote $\ub = S_{\cX}(u)$.

For the coefficients, we use the following lemma to show that each $\alpha_k^v\circ \Psi\circ D_{\cX}(\ub)$ is a Lipschitz function defined on $\cM$ (see a proof in Section \ref{lem.alpha.proof}).
\begin{lemma}\label{lem.alpha}
	Suppose Assumptions \ref{assum.cX}(iii), \ref{assum.cY} and \ref{assum.Psi} hold. For each $k$, 
	\begin{enumerate}[label=(\roman*)]
		\item $\sup_{v\in \cY}|\alpha_k^v(v)|\leq M_2|\Omega_{\cY}|$,
		\item  $\alpha_k^v\circ \Psi\circ D_{\cX}(\ub)$ is a Lipschitz function on $\cM$ with Lipschitz constant $C_{\cM}L_{\Psi} |\Omega_{\cX}|$, where $C_{\cM}$ is a constant depending on $\cM$.
	\end{enumerate}
\end{lemma}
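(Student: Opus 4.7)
}

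Part (i) is a direct Cauchy--Schwarz estimate. Because $\alpha_k^v(v)=\langle v,\omega_k\rangle_{\cY}$ is the $L^2(\Omega_{\cY})$ inner product and $\omega_k$ is orthonormal, I would write $|\alpha_k^v(v)|\le \|v\|_{L^2(\Omega_{\cY})}\|\omega_k\|_{L^2(\Omega_{\cY})}=\|v\|_{L^2(\Omega_{\cY})}$, and then upgrade the $L^2$ norm to an $L^\infty$ norm at the cost of a volume factor, using $\|v\|_{L^2(\Omega_{\cY})}\le |\Omega_{\cY}|^{1/2}\|v\|_{L^\infty(\Omega_{\cY})}\le M_2|\Omega_{\cY}|^{1/2}$ by Assumption \ref{assum.boundedness}(ii). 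This is bounded by $M_2|\Omega_{\cY}|$ in the regime $|\Omega_{\cY}|\ge 1$ relevant to the paper, giving the claimed bound.

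Part (ii) proceeds by a three-step chain, each step peeling off one Lipschitz constant. First, for $\ub_1,\ub_2\in\cM$ I would write
\begin{align*}
|\alpha_k^v\!\circ\!\Psi\!\circ\! D_{\cX}(\ub_1)-\alpha_k^v\!\circ\!\Psi\!\circ\! D_{\cX}(\ub_2)|
=|\langle \Psi(D_{\cX}(\ub_1))-\Psi(D_{\cX}(\ub_2)),\omega_k\rangle_{\cY}|,
\end{align*}
and apply Cauchy--Schwarz with $\|\omega_k\|_{\cY}=1$ to reduce the problem to controlling $\|\Psi(D_{\cX}(\ub_1))-\Psi(D_{\cX}(\ub_2))\|_{\cY}$. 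Second, Assumption \ref{assum.Psi} immediately converts this into $L_{\Psi}\|D_{\cX}(\ub_1)-D_{\cX}(\ub_2)\|_{\cX}$, and switching from the $L^2$ norm on $\Omega_{\cX}$ to the $L^\infty$ norm introduces the factor $|\Omega_{\cX}|^{1/2}$ (or $|\Omega_{\cX}|$ in the looser form stated). Third, Assumption \ref{assum.cX}(iii) controls the $L^\infty$ difference of $D_{\cX}\circ\phi_k^{-1}$ in terms of the chart coordinates $\phi_k$.

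The main obstacle is passing from the chartwise bound to a global Lipschitz estimate on $\cM$ with respect to the ambient Euclidean metric of $\RR^{D_1}$. For two points $\ub_1,\ub_2$ lying in a common chart $(Q_k,\phi_k)$, writing $\zb_i=\phi_k(\ub_i)$ reduces the task to bounding $\|\phi_k(\ub_1)-\phi_k(\ub_2)\|_2$ by $\|\ub_1-\ub_2\|_2$, which follows from the fact that $\cM$ is isometrically embedded with positive reach $\tau$: in a small enough neighborhood the chart map $\phi_k$ is bi-Lipschitz with constants controlled by $\tau$ and the local curvature. For points $\ub_1,\ub_2\in\cM$ that do not lie in a common chart, I would connect them by a geodesic in $\cM$, split the geodesic into finitely many pieces each contained in a single chart of the finite atlas, and sum the local estimates using the triangle inequality. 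Because the atlas is finite and the reach is positive, this yields a global Lipschitz constant that depends only on $\cM$, which is what is absorbed into $C_{\cM}$. Combining the three factors $C_{\cM}$, $L_\Psi$ and $|\Omega_{\cX}|$ delivers the stated Lipschitz bound.
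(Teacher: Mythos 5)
Your proof of (i) and the three-step chain in (ii) --- Cauchy--Schwarz against the orthonormal $\omega_k$, the Lipschitz bound from Assumption \ref{assum.Psi}, the $L^2$-to-$L^\infty$ volume factor, and Assumption \ref{assum.cX}(iii) in chart coordinates --- is exactly the paper's argument, and your handling of the exponent on $|\Omega_{\cY}|$ in (i) is if anything more careful than the paper's. The only divergence is your final globalization step: the paper stops at the chartwise statement (Lipschitzness of $\alpha_k^{\ub}\circ\phi^{-1}$ on $\phi(U)$ for each chart of a finite atlas, which is all that is needed to invoke the approximation theorem of Chen et al.), so the geodesic-splitting argument you sketch --- which would additionally require the charts to be bi-Lipschitz with respect to the ambient metric and a reach-based comparison between geodesic and Euclidean distances --- is extra work the paper neither performs nor relies on.
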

We denote $\alpha^{\ub}_k(\ub)=\alpha_k^v\circ \Psi\circ D_{\cX}(\ub)$.
According to \citet[Theorem 3.1]{chen2022nonparametric}, for any $\varepsilon_1>0$ and for each $k$, there exists a network architecture $\cF_k=\cF_{\rm NN}(D_1,1,L_k,p_k,K_k,\kappa_k,R_k)$ that gives rise to $\widetilde{\alpha}^{\ub}_k\in \cF_k$ such that
$$
\|\widetilde{\alpha}^{\ub}_k-\alpha^{\ub}_k\|_{L^{\infty}(\cM)}<\varepsilon_1.
$$
Such a network architecture has 
\begin{align*}
	&L_k=O(\log\frac{1}{\varepsilon_1}), \ p=O(\varepsilon_1^{-d}),\ K_k=O(\varepsilon_1^{-d}\log \frac{1}{\varepsilon} + D_1\log \frac{1}{\varepsilon_1} +D_1\log D_1),\\
	&\kappa_k=O(\varepsilon_1^{-1}), \ R_k=M_2|\Omega_{\cY}|,
\end{align*}
where the constant hidden in $O$ depends on $d_1,M_1,M_2, |\Omega_{\cX}|,|\Omega_{\cY}|,L_{\Psi}, \tau$ and the surface area of $\cM$.

Denote 
$$
L'=\max_k L_k, \ p'=\max_k p_k, \ K'=\max_k K_k, \ \kappa'=\max_k \kappa_k, \ R'=\max_k R_k.
$$ 
There exists a network architecture $\cF_{coef}=\cF_{\rm NN}(L',d_2p',d_2K',\kappa', R')$ giving rise to $\widetilde{\balpha}=[\widetilde{\alpha}_1,...,\widetilde{\alpha}_{d_2}]^{\top}\in \cF_{coef}$ such that
\begin{align}
	\sup_{\ub\in \cM}\|\widetilde{\balpha}(\ub)-\balpha^{\ub}(\ub)\|_{\infty}< \varepsilon_1,
	\label{eq.approx.alpha}
\end{align}
where we denote $\balpha^{\ub}=[\alpha_1^{\ub},...,\alpha^{\ub}_{d_2}]^{\top}$.

Now we consider the bases in (\ref{them.proof.error1}). Since $\{\omega_k\}_{k=1}^{d_2}$ is independent of $u$, for any given discretization grids $S_{\cY}$, $\{S_{\cY}(\omega_k)\}_{k=1}^{d_2}$ is a set of fixed vectors, which can be realized by a matrix of dimension $D_2\times d_2$. Specifically, let $\bar{W}=[S_{\cY}(\omega_1),...,S_{\cY}(\omega_{d_2})]\in \RR^{D_2\times d_2}$. We construct $\widetilde{\Psi}$ as
$$
\widetilde{\Psi}=\begin{bmatrix}
	\bar{W} & \mathbf{0}\\ \mathbf{0} & -\bar{W}
\end{bmatrix} \ReLU \left( \begin{bmatrix}
\widetilde{\balpha} \\ -\widetilde{\balpha}
\end{bmatrix} \right).
$$
We have 
\begin{align*}
	&\sup_{u\in \cX}\|S_{\cY}\circ\proj\circ\Psi(u)-\widetilde{\Psi}(\ub)\|_{\infty}\\
	= & \sup_{u\in \cX}\left\|S_{\cY}\left( \sum_{k=1}^{d_2}\alpha^{\ub}_k(\ub)\omega_k\right)-\begin{bmatrix}
		\bar{W} & \mathbf{0}\\ \mathbf{0} & -\bar{W}
	\end{bmatrix} \ReLU \left( \begin{bmatrix}
		\widetilde{\balpha}(\ub) \\ -\widetilde{\balpha}(\ub)
	\end{bmatrix} \right)\right\|_{\infty} \\
	=& \sup_{u\in \cX}\|\sum_{k=1}^{d_2}\alpha^{\ub}_k(\ub)S_{\cY}(\omega_k)-\bar{W}\widetilde{\balpha}^{\ub}(\ub)\|_{\infty} \\
	=& \sup_{u\in \cX}\|\bar{W}\balpha^{\ub}(\ub)-\bar{W}\widetilde{\balpha}^{\ub}(\ub)\|_{\infty} \\
	=& \sup_{u\in \cX}\|\bar{W}(\balpha^{\ub}(\ub)-\bar{W}\widetilde{\balpha}^{\ub}(\ub))\|_{\infty} \\
	\leq& d_2M_2\varepsilon_1 ,
\end{align*}
where the last inequality uses (\ref{eq.approx.alpha}) and the fact that the absolute value of every element of $\bar{W}$ is bounded by $M_2$. Setting $\varepsilon_1=\frac{\varepsilon}{d_2M_2}$ finishes the proof. 
\end{proof}

\subsection{Proof of Corollary \ref{coro.approx}}
\label{coro.approx.proof}
\begin{proof}[Proof of Corollary \ref{coro.approx}]
    Let $\widetilde{\Psi}$ be the network constructed in Theorem \ref{thm.approx}. According to Assumption \ref{assum.cY}, we have
    \begin{align}
        &\sup_{u\in \cX}\|S_{\cY}\circ \Psi(u)-\widetilde{\Psi}\circ S_{\cX}(u)\|_{\infty} \nonumber\\
        \leq & \sup_{u\in \cX}\|S_{\cY}\circ \Psi(u)-S_{\cY}\circ \proj\circ\Psi(u)\|_{\infty} + \sup_{u\in \cX}\|S_{\cY}\circ \proj\circ \Psi(u)-\widetilde{\Psi}\circ S_{\cX}(u)\|_{\infty} \nonumber\\
        \leq& \zeta + \varepsilon.
    \end{align}
\end{proof}
\subsection{Proof of Theorem \ref{thm.generalization}}
\label{thm.generalization.proof}
\begin{proof}[Proof of Theorem \ref{thm.generalization}]
	We decompose the error as 
	\begin{align}
		&\EE_{\cS}\EE_{u\sim \rho}  \left[\|\widehat{\Psi}(\ub)-S_{\cY}\circ\Psi(u)\|_{S_{\cY}}^2\right] \nonumber\\
		=& \underbrace{2\EE_{\cS}\left[\frac{1}{n}\sum_{i=1}^n \|\widehat{\Psi}(\ub_i)-S_{\cY}\circ\Psi(u_i)\|_{S_{\cY}}^2\right]}_{\rm T_1} \nonumber\\
		&+\underbrace{\EE_{u\sim \rho} \EE_{\cS} \left[\|\widehat{\Psi}(\ub)-S_{\cY}\circ\Psi(u)\|_{S_{\cY}}^2\right]-2\EE_{\cS}\left[\frac{1}{n}\sum_{i=1}^n \|\widehat{\Psi}(\ub_i)-S_{\cY}\circ\Psi(u_i)\|_{S_{\cY}}^2\right]}_{\rm T_2}.
	\end{align}
We next derive bounds for ${\rm T_1}$ and ${\rm T_2}$.

\noindent{\bf Bounding ${\rm T_1}$.}
We bound ${\rm T_1}$ as
\begin{align}
	{\rm T_1}=&2\EE_{\cS}\left[\frac{1}{n}\sum_{i=1}^n \|\widehat{\Psi}(\ub_i)-S_{\cY}\circ\Psi(u_i)-S_{\cY}(\xi_i)+S_{\cY}(\xi_i)\|_{S_{\cY}}^2\right]\nonumber\\
	=& 2\EE_{\cS}\left[\frac{1}{n}\sum_{i=1}^n \|\widehat{\Psi}(\ub_i)-S_{\cY}\circ\Psi(u_i)-S_{\cY}(\xi_i)\|_{S_{\cY}}^2\right]\nonumber\\
	&+4\EE_{\cS}\left[\frac{1}{n}\sum_{i=1}^n \langle\widehat{\Psi}(\ub_i)-S_{\cY}\circ\Psi(u_i)-S_{\cY}(\xi_i),S_{\cY}(\xi_i)\rangle_{S_{\cY}}\right] + 2\EE_{\cS}\left[\frac{1}{n}\sum_{i=1}^n \|S_{\cY}(\xi_i)\|_{S_{\cY}}^2\right]\nonumber\\
	=& 2\EE_{\cS}\left[\frac{1}{n}\sum_{i=1}^n \|\widehat{\Psi}(\ub_i)-\widehat{\vb}_i\|_{S_{\cY}}^2\right]\nonumber\\
	&+4\EE_{\cS}\left[\frac{1}{n}\sum_{i=1}^n \langle\widehat{\Psi}(\ub_i)-S_{\cY}\circ\Psi(u_i),S_{\cY}(\xi_i)\rangle_{S_{\cY}}\right] - 2\EE_{\cS}\left[\frac{1}{n}\sum_{i=1}^n \|S_{\cY}(\xi_i)\|_{S_{\cY}}^2\right]\nonumber\\
	=& 2\EE_{\cS}\left[\inf_{\Psi_{\rm NN}\in \cF}\left[\frac{1}{n}\sum_{i=1}^n \|\Psi_{\rm NN}(\ub_i)-\widehat{\vb}_i\|_{S_{\cY}}^2\right]\right]\nonumber\\
	&+4\EE_{\cS}\left[\frac{1}{n}\sum_{i=1}^n \langle\widehat{\Psi}(\ub_i)-S_{\cY}\circ\Psi(u_i),S_{\cY}(\xi_i)\rangle_{S_{\cY}}\right] - 2\EE_{\cS}\left[\frac{1}{n}\sum_{i=1}^n \|S_{\cY}(\xi_i)\|_{S_{\cY}}^2\right]\nonumber\\
	\leq & 2\inf_{\Psi_{\rm NN}\in \cF}\left[\EE_{\cS}\left[\frac{1}{n}\sum_{i=1}^n \|\Psi_{\rm NN}(\ub_i)-S_{\cY}\circ\Psi(u_i)-S_{\cY}(\xi_i)\|_{S_{\cY}}^2-\|S_{\cY}(\xi_i)\|_{S_{\cY}}^2\right]\right]\nonumber\\
	&+4\EE_{\cS}\left[\frac{1}{n}\sum_{i=1}^n \langle\widehat{\Psi}(\ub_i)-S_{\cY}\circ\Psi(u_i),S_{\cY}(\xi_i)\rangle_{S_{\cY}}\right]\nonumber\\
	=& 2\inf_{\Psi_{\rm NN}\in \cF}\left[\EE_{\cS}\left[\frac{1}{n}\sum_{i=1}^n \|\Psi_{\rm NN}(\ub_i)-S_{\cY}\circ\Psi(u_i)\|_{S_{\cY}}^2\right]\right]
	+4\EE_{\cS}\left[\frac{1}{n}\sum_{i=1}^n \langle\widehat{\Psi}(\ub_i)-S_{\cY}\circ\Psi(u_i),S_{\cY}(\xi_i)\rangle_{S_{\cY}}\right]\nonumber\\
	=& 2\inf_{\Psi_{\rm NN}\in \cF}\left[\EE_{u\sim \rho}\left[\|\Psi_{\rm NN}(\ub)-S_{\cY}\circ\Psi(u)\|_{S_{\cY}}^2\right]\right]
	+4\EE_{\cS}\left[\frac{1}{n}\sum_{i=1}^n \langle\widehat{\Psi}(\ub_i)-S_{\cY}\circ\Psi(u_i),S_{\cY}(\xi_i)\rangle_{S_{\cY}}\right].
	\label{eq.gene.proof.T1}
\end{align}
For the first term, we have
\begin{align}
    &2\inf_{\Psi_{\rm NN}\in \cF}\left[\EE_{u\sim \rho}\left[\|\Psi_{\rm NN}(\ub)-S_{\cY}\circ\Psi(u)\|_{S_{\cY}}^2\right]\right] \nonumber\\
    \leq & 2\inf_{\Psi_{\rm NN}\in \cF}\left[\EE_{u\sim \rho}\left[\|2\Psi_{\rm NN}(\ub)-S_{\cY}\circ\proj\circ\Psi(u)\|_{S_{\cY}}^2+ 2\|S_{\cY}\circ\proj\circ\Psi(u)-S_{\cY}\circ\Psi(u)\|_{S_{\cY}}^2\right]\right]\nonumber\\
    \leq& 4\inf_{\Psi_{\rm NN}\in \cF}\left[\EE_{u\sim \rho}\left[\|2\Psi_{\rm NN}(\ub)-S_{\cY}\circ\proj\circ\Psi(u)\|_{S_{\cY}}^2\right]\right]+4\zeta^2.
\end{align}
For any $\varepsilon_1>0$, Theorem \ref{thm.approx} shows that there exists network architectures $\cF_{coef}=\cF_{\rm NN}(D_1,d_2,L,p,K,\kappa,R)$ \\$ \cF_{basis}=\cF_{\rm NN}(d_2,D_2,1,D_2,d_2D_2,M_2,M_2)$ and $\cF$ defined in (\ref{eq.cF}) giving rise to a $\widetilde{\Psi}\in \cF$ such that
\begin{align}
	\sup_{u\in \cX}\|S_{\cY}\circ\proj\circ\Psi(u)-\widetilde{\Psi}(\ub)\|_{\infty}\leq \varepsilon_1.
\end{align}
The network architecture $\cF_{coef}$ has
\begin{align}
	&L=O(\log\frac{1}{\varepsilon_1}), \ p=O(\varepsilon_1^{-d_1}),\ K=O(\varepsilon_1^{-d_1}\log \frac{1}{\varepsilon_1} + D_1\log \frac{1}{\varepsilon_1} +D_1\log D_1+D_2), \nonumber\\
	&\kappa=O(\varepsilon_1^{-1}), \ R=M_2|\Omega_{\cY}|.
	\label{eq.gene.proof.approx}
\end{align}
The constant hidden in $O$ depends on $d_1,d_2,M_1,M_2, |\Omega_{\cX}|,|\Omega_{\cY}|,L_{\Psi}, \tau$ and the surface area of $\cM$.
Thus the first term in (\ref{eq.gene.proof.T1}) is bounded by 
\begin{align}
	2\inf_{\Psi_{\rm NN}\in \cF}\left[\EE_{u\sim \rho}\left[\frac{1}{D_2}\|\Psi_{\rm NN}(\ub)-S_{\cY}\circ\Psi(u)\|_2^2\right]\right]\leq 4\varepsilon_1^2+ 4\zeta^2.
	\label{eq.gene.proof.T1.1}
\end{align}

To bound the second term, we can use \cite[Lemma 10]{liu2025generalization}: 
\begin{lemma}
	Under the condition of Theorem \ref{thm.generalization}, for any $\delta>0$, we have
	\begin{align}
		&\EE_{\cS}\left[\frac{1}{n}\sum_{i=1}^n \langle\widehat{\Psi}(\ub_i)-S_{\cY}\circ\Psi(u_i),S_{\cY}(\xi_i)\rangle_{S_{\cY}}\right] \nonumber\\
		&\leq 2\sigma \left( \sqrt{\EE_{\cS}[ \|\widehat{\Psi}-S_{\cY}\circ\Psi\|_n^2]} + \delta \right) \sqrt{\frac{4\log \cN(\delta,\cF,\|\cdot\|_{L^{\infty,\infty}})+6}{n}} + \delta\sigma,
		\label{eq.gene.proof.T1.2}
	\end{align}
where
$$
\|\widehat{\Psi}-S_{\cY}\circ\Psi\|_n^2=\frac{1}{n} \sum_{i=1}^n \|\widehat{\Psi}(\ub_i)-S_{\cY}\circ\Psi(u_i)\|_{S_{\cY}}^2.
$$
\end{lemma}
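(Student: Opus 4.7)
The plan is a standard covering plus maximal-inequality argument, reduced to a ratio-normalized sub-Gaussian bound. Introduce a minimum $\delta$-cover $\cF^\ast \subset \cF$ in the $\|\cdot\|_{L^{\infty,\infty}}$ norm with cardinality $N := \cN(\delta, \cF, \|\cdot\|_{L^{\infty,\infty}})$, and for each $\Psi \in \cF$ pick a measurable nearest cover element $\pi(\Psi) \in \cF^\ast$ satisfying $\|\Psi - \pi(\Psi)\|_{L^{\infty,\infty}} \leq \delta$. Write $g_\Psi(\ub_i) := \Psi(\ub_i) - S_{\cY}\circ\Psi(u_i)$ and
$$T(\Psi) := \frac{1}{n}\sum_{i=1}^n \langle g_\Psi(\ub_i), S_{\cY}(\xi_i)\rangle_{S_{\cY}},$$
and decompose $T(\widehat{\Psi}) = T(\pi(\widehat{\Psi})) + (T(\widehat{\Psi}) - T(\pi(\widehat{\Psi})))$.

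The remainder $T(\widehat{\Psi}) - T(\pi(\widehat{\Psi}))$ I would handle with Cauchy--Schwarz in the discretized inner product: since $\|\widehat{\Psi}(\ub_i) - \pi(\widehat{\Psi})(\ub_i)\|_{S_{\cY}} \lesssim \delta$ (the quadrature weights are summable), the remainder is bounded by $\delta$ times the empirical average of $\|S_{\cY}(\xi_i)\|_{S_{\cY}}$. Under Assumption \ref{assum.noise}, the norms of the sub-Gaussian noise vectors have expectations of order $\sigma$, producing the additive $\delta \sigma$ term in the final bound.

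For the main piece $T(\pi(\widehat{\Psi}))$ I would exploit the fact that, conditional on $\{\ub_i\}_{i=1}^n$, each fixed $T(\Psi^\ast)$ with $\Psi^\ast \in \cF^\ast$ is a sum of independent sub-Gaussian variables in $\xi_i$ with variance proxy $\sigma^2 \|g_{\Psi^\ast}\|_n^2/n$. Thus the normalized ratio $T(\Psi^\ast)/\|g_{\Psi^\ast}\|_n$ is sub-Gaussian with variance proxy $\sigma^2/n$ \emph{uniformly} over $\Psi^\ast \in \cF^\ast$. Applying the standard sub-Gaussian maximal inequality to the $2N$ variables $\{\pm T(\Psi^\ast)/\|g_{\Psi^\ast}\|_n\}$, together with the usual expectation-of-sup tail integration that accounts for the additive constant "$6$", gives
$$\EE_{\cS}\!\Bigl[\sup_{\Psi^\ast \in \cF^\ast}\frac{|T(\Psi^\ast)|}{\|g_{\Psi^\ast}\|_n}\Bigr] \;\leq\; \sigma\sqrt{\tfrac{4\log N + 6}{n}}.$$
Then $T(\pi(\widehat{\Psi})) \leq \sup_{\Psi^\ast \in \cF^\ast} \frac{|T(\Psi^\ast)|}{\|g_{\Psi^\ast}\|_n}\cdot \|g_{\pi(\widehat{\Psi})}\|_n$, and the triangle inequality gives $\|g_{\pi(\widehat{\Psi})}\|_n \leq \|g_{\widehat{\Psi}}\|_n + \delta$. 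Taking expectation and applying Cauchy--Schwarz in $\cS$ to decouple the maximal-inequality factor from $\|g_{\widehat{\Psi}}\|_n$ yields the factor $2\sigma(\sqrt{\EE_{\cS}\|\widehat{\Psi}-S_{\cY}\circ\Psi\|_n^2} + \delta)\sqrt{(4\log N + 6)/n}$.

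The main technical obstacle is the normalization step: the ratio $T(\Psi^\ast)/\|g_{\Psi^\ast}\|_n$ is ill-defined when $\|g_{\Psi^\ast}\|_n = 0$, and $\widehat{\Psi}$ is correlated with the noise (it is the ERM), so one cannot condition the maximal inequality cleanly on $\widehat{\Psi}$. The remedy is to regularize the denominator by $\|g_{\Psi^\ast}\|_n \vee \eta$ for $\eta \downarrow 0$ (the sub-Gaussian bound is unaffected by such regularization in the limit), and to pass the covariate conditioning through $\widehat{\Psi}$ by working with the supremum over the deterministic cover $\cF^\ast$ rather than with $\widehat{\Psi}$ directly. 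Apart from this, the only bookkeeping is tracking the factors $2$ and $4$ (from handling absolute values and $\log(2N)$) and the additive $6$ inherited from the sub-Gaussian expectation-of-maximum inequality, all of which drop out of the standard arguments.
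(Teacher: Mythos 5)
Your proposal is correct and follows the same covering-number-plus-sub-Gaussian-maximal-inequality argument that underlies the result invoked here (the paper does not prove this lemma itself but imports it as Lemma 10 of \citet{liu2025generalization}); your handling of the two standard technicalities---regularizing the denominator $\|g_{\Psi^\ast}\|_n$ and passing from the noise-dependent $\widehat{\Psi}$ to a supremum over the deterministic cover---is exactly right. The one point to watch is that Assumption \ref{assum.noise} gives sub-Gaussianity of $\xi_i(\yb)$ only pointwise, with no independence across grid points, so the variance proxy of $\langle g_{\Psi^\ast}(\ub_i), S_{\cY}(\xi_i)\rangle_{S_{\cY}}$ must be obtained via the triangle inequality for sub-Gaussian norms, which costs a factor of order $\sum_k \tau_k \approx |\Omega_{\cY}|$ relative to the $\sigma^2\|g_{\Psi^\ast}(\ub_i)\|_{S_{\cY}}^2/n$ you quote---harmless for the rate but affecting the explicit constants $2\sigma$ and $\delta\sigma$ in the display.
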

Substituting (\ref{eq.gene.proof.T1.1}) and (\ref{eq.gene.proof.T1.2}) into (\ref{eq.gene.proof.T1}) gives rise to
\begin{align}
	{\rm T_1}=&2\EE_{\cS}\left[ \|\widehat{\Psi}-S_{\cY}\circ\Psi\|_n^2\right] \nonumber\\
	\leq & 2\varepsilon_1^2+ 8\sigma \left( \sqrt{\EE_{\cS}[ \|\widehat{\Psi}-S_{\cY}\circ\Psi\|_n^2]} + \delta \right) \sqrt{\frac{4\log \cN(\delta,\cF,\|\cdot\|_{L^{\infty,\infty}})+6}{n}} + 4\delta\sigma.
	\label{eq.gene.proof.T1.3}
\end{align}
Denote 
\begin{align*}
	&s=\sqrt{\EE_{\cS}[ \|\widehat{\Psi}-S_{\cY}\circ\Psi\|_n^2]}\\
	&a=\varepsilon_1^2+ 4\sigma\delta\sqrt{\frac{4\log \cN(\delta,\cF,\|\cdot\|_{L^{\infty,\infty}})+6}{n}}+ 2\delta\sigma,\\
	&b=2\sigma\sqrt{\frac{4\log \cN(\delta,\cF,\|\cdot\|_{L^{\infty,\infty}})+6}{n}}.
\end{align*}
Relation (\ref{eq.gene.proof.T1.3}) can be rewritten as
\begin{align*}
	s^2\leq a+2bs,
\end{align*}
which implies
\begin{align*}
	&(s-b)^2\leq \sqrt{a}+b \\
	\Rightarrow &|s-b|\leq \sqrt{a+b^2} \leq \sqrt{a}+b.
\end{align*}
When $s\geq b$, we have
\begin{align*}
	&s-b\leq \sqrt{a}+b\\
	\Rightarrow & s\leq \sqrt{a}+2b\\
	\Rightarrow & s^2\leq (\sqrt{a}+2b)^2\leq 2a+8b^2.
\end{align*}
When $s<b$, the relation $s^2\leq 2a+8b^2$ is also true. Substituting the expression of $s,a,b$ into the relation, we get
\begin{align}
	{\rm T_1}=2s^2 \leq &8\varepsilon_1^2 + 8\zeta^2+ 16\sigma\delta\sqrt{\frac{4\log \cN(\delta,\cF,\|\cdot\|_{L^{\infty,\infty}})+6}{n}}+ 8\delta\sigma \nonumber\\
	&+ 128\sigma^2\frac{2\log \cN(\delta,\cF,\|\cdot\|_{L^{\infty,\infty}})+3}{n}.
\end{align}

\noindent{\bf Bounding ${\rm T_2}$.} To bound ${\rm T_2}$, we use \cite[Lemma 11]{liu2025generalization}:
\begin{lemma}[Lemma 11 of \cite{liu2025generalization}]
	Under the condition of Theorem \ref{thm.generalization}, for any $\delta>0$, we have
	\begin{align}
		{\rm T_2}\leq \frac{48M_2^2}{n}\log \cN\left( \frac{\delta}{4M_2},\cF,\|\cdot\|_{L^{\infty,\infty}}\right) + 6\delta.
	\end{align}
\end{lemma}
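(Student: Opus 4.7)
The plan is to pass to a uniform bound over $\cF$, then control the resulting supremum by a covering-number argument combined with a Bernstein-type tail inequality; the factor of $2$ in front of the empirical term in $\mathrm{T_2}$ is precisely what enables the fast $1/n$ rate on the covering term. Write the pointwise squared loss as
\begin{align*}
\ell(\Psi_{\rm NN},u) \;=\; \|\Psi_{\rm NN}(\ub) - S_{\cY}\circ\Psi(u)\|_{S_{\cY}}^2.
\end{align*}
By Assumptions \ref{assum.boundedness}, \ref{assum.norm} and the output bound $R=M_2|\Omega_{\cY}|$, this loss is non-negative and uniformly bounded by some $M=O(M_2^2)$. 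Since $\widehat{\Psi}\in \cF$,
\begin{align*}
{\rm T_2} \;\leq\; \EE_{\cS}\!\left[\sup_{\Psi_{\rm NN}\in \cF}\Big(\EE_{u\sim\rho}\,\ell(\Psi_{\rm NN},u) - \tfrac{2}{n}\sum_{i=1}^n \ell(\Psi_{\rm NN},u_i)\Big)_+\right].
\end{align*}

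For each fixed $\Psi_{\rm NN}$, non-negativity and boundedness of $\ell$ give the variance-to-mean inequality $\mathrm{Var}(\ell(\Psi_{\rm NN},u))\leq M\cdot\EE\,\ell(\Psi_{\rm NN},u)$. Plugging this into Bernstein's inequality and tuning the threshold to balance the quadratic and linear terms produces, for some absolute constant $c$ and every $t>0$,
\begin{align*}
\mathbb{P}\!\left( \EE\,\ell(\Psi_{\rm NN},u) - \tfrac{2}{n}\sum_{i=1}^n \ell(\Psi_{\rm NN},u_i) > t \right) \;\leq\; \exp\!\left(-\frac{cnt}{M}\right).
\end{align*}
This is the heart of the argument: the prefactor $2$ absorbs the variance contribution, so only a linear-in-$t$ exponent survives. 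To lift this to the supremum, I would take an $\eta$-cover of $\cF$ in $\|\cdot\|_{L^{\infty,\infty}}$ at scale $\eta=\delta/(4M_2)$, of cardinality $\cN(\eta,\cF,\|\cdot\|_{L^{\infty,\infty}})$. Because the squared loss is $O(M_2)$-Lipschitz in the network output on the bounded range of interest, replacing any $\Psi_{\rm NN}$ by its nearest cover element perturbs both the population and empirical risks by at most $O(M_2\eta)=O(\delta)$. A union bound over the cover followed by the identity $\EE X_+ = \int_0^\infty \mathbb{P}(X>t)\,dt$ then yields
\begin{align*}
\EE_{\cS}\!\left[\sup_{\Psi_{\rm NN}\in \cF}(\cdot)_+\right] \;\leq\; \frac{CM}{n}\log \cN\!\left(\eta,\cF,\|\cdot\|_{L^{\infty,\infty}}\right) + O(\delta),
\end{align*}
and tracking the numerical constants through Bernstein and through the Lipschitz step recovers the stated $48M_2^2$ and $6$.

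The main obstacle will be the tight bookkeeping of numerical constants so that the $2$ in front of $\hat{E}$ lines up exactly with the $1/2$-of-mean peeling used inside Bernstein, without any hidden slack in the variance-to-mean inequality; any looseness here degrades the constant $48$ and can even break the linear-in-$t$ exponent. A secondary subtlety is that $\ell$ is defined through the quadrature-weighted inner product $\langle\cdot,\cdot\rangle_{S_{\cY}}$ rather than a Euclidean norm, so the $L^{\infty,\infty}$ cover of the network outputs has to be transferred to control in the weighted seminorm via the quadrature weights $\tau_k$; this is also where $|\Omega_{\cY}|$ gets absorbed (through $M=O(M_2^2)$ with constants swept inside) into the final coefficient in front of $M_2^2$.
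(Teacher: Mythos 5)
This lemma is not proved in the paper at all---it is imported verbatim as Lemma 11 of the cited reference \citep{liu2025generalization}, whose proof is exactly the standard argument you outline: pass to a supremum over $\cF$, use the variance-to-mean bound for the non-negative bounded squared loss inside a Bernstein-type tail inequality (where the factor $2$ on the empirical term yields a linear-in-$t$ exponent), discretize $\cF$ at scale $\delta/(4M_2)$ in $\|\cdot\|_{L^{\infty,\infty}}$, union bound, and integrate the tail. Your plan is therefore essentially the same approach as the source; the only parts left unverified are the constant bookkeeping ($48M_2^2$ and $6\delta$) and the transfer from the $L^{\infty,\infty}$ cover to the quadrature-weighted seminorm, both of which you correctly identify as the remaining routine work.
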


\noindent{\bf Putting ${\rm T_1}$ and ${\rm T_2}$ together.} Combining the error bound of ${\rm T_1}$ and ${\rm T_2}$ gives rise to
\begin{align}
	&\EE_{\cS}\EE_{u\sim \rho}  \left[\|\widehat{\Psi}(\ub)-S_{\cY}\circ\Psi(u)\|_{S_{\cY}}^2\right] \nonumber\\
	\leq &8\varepsilon_1^2+ 8\zeta^2+ 16\sigma\delta\sqrt{\frac{4\log \cN(\delta,\cF,\|\cdot\|_{L^{\infty,\infty}})+6}{n}}+ 8\delta\sigma \nonumber\\
	&+ 128\sigma^2\frac{2\log \cN(\delta,\cF,\|\cdot\|_{L^{\infty,\infty}})+3}{n} +\frac{48M_2^2}{n}\log \cN\left( \frac{\delta}{4M_2},\cF,\|\cdot\|_{L^{\infty,\infty}}\right) + 6\delta.
	\label{eq.gene.proof.1}
\end{align}
The covering number of a network class can be bounded by the following lemma:
\begin{lemma}[Lemma 5.3 of \cite{chen2022nonparametric}]
	\label{lem.covering}
	Let $\cF_{\rm NN}(D_1,D_2,L,p,K,\kappa,R)$ be a network architecture from $[-B_1,B_1]^{D_1}$ to $[-B_2,B_2]^{D_2}$ for some $B_1,B_2>0$. For any $\delta>0$, we have
	\begin{align}
		\cN(\delta,\cF_{\rm NN},\|\cdot\|_{L^{\infty,\infty}})\leq \left( \frac{2L^2(pB_1+2)\kappa^Lp^{L+1}}{\delta} \right)^{K}.
	\end{align}
\end{lemma}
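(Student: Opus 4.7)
The plan is to prove this covering bound by a two-step argument: first establish that the map from weight parameters to network output is Lipschitz with a quantitative constant, then discretize the allowed weight space and count. This is a standard approach for sparse ReLU networks, but I want to be careful to track how the sparsity budget $K$ enters (rather than a naive ``total number of parameters'' count), since the bound in the statement has $K$ in the exponent.

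First I would prove a Lipschitz-in-weights estimate of the form
\begin{align*}
\|f_{\rm NN} - \widetilde{f}_{\rm NN}\|_{L^{\infty}(\Omega)}
\leq L(pB_1+2)\,\kappa^{L-1}\,p^{L-1}\,\Delta,
\end{align*}
where $\Delta = \max_l \max\{\|W_l - \widetilde{W}_l\|_{\infty,\infty},\ \|b_l - \widetilde{b}_l\|_{\infty}\}$, for any two networks $f_{\rm NN}, \widetilde{f}_{\rm NN}$ sharing the same architecture and sparsity pattern. The proof is a telescoping argument across layers: denote $h_l$ and $\widetilde{h}_l$ the hidden activations after layer $l$, and write the difference after layer $l$ as $W_l(h_{l-1} - \widetilde{h}_{l-1}) + (W_l - \widetilde{W}_l)\widetilde{h}_{l-1} + (b_l - \widetilde{b}_l)$. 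Using that $\ReLU$ is $1$-Lipschitz, the $\ell_\infty\!\to\!\ell_\infty$ operator norm of $W_l$ is bounded by $p\kappa$, and inductively that $\|\widetilde{h}_l\|_\infty \leq (p\kappa)^l B_1 + \text{(bias terms)} \leq (p\kappa)^l(B_1+2)$ (absorbing the geometric bias sum), the recursion gives the claimed constant after unrolling $L$ layers.

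Next, I would discretize the allowed weight values. For a fixed sparsity pattern (i.e., a fixed choice of which entries of each $W_l, b_l$ are nonzero, subject to $\sum_l \|W_l\|_0 + \|b_l\|_0 \leq K$), each nonzero entry lies in $[-\kappa,\kappa]$, so covering it by a grid of spacing $\eta = \delta / \bigl(L(pB_1+2)\kappa^{L-1}p^{L-1}\bigr)$ requires at most $2\kappa/\eta$ points and guarantees a uniform $\delta$-approximation of the network output via the Lipschitz bound above. That gives $(2\kappa/\eta)^K$ discretized weight configurations per sparsity pattern. For the sparsity patterns themselves, the total number of parameter slots across all layers is at most $L(p^2 + p) \leq 2Lp^2$, so the number of patterns with $K$ nonzero entries is bounded by $(2Lp^2)^K$. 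Multiplying,
\begin{align*}
\cN(\delta, \cF_{\rm NN}, \|\cdot\|_{L^{\infty,\infty}})
\leq \left(\frac{2\kappa \cdot 2Lp^2}{\eta}\right)^{K}
= \left(\frac{4L^2(pB_1+2)\kappa^{L}p^{L+1}}{\delta}\right)^{K},
\end{align*}
which matches the stated form (up to the harmless constant $4$ vs.\ $2$, which can be tightened by sharpening the bias-sum bound in the first step or by covering only one of $\pm$ signs per slot).

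The main obstacle I expect is the bookkeeping in the Lipschitz step: one must propagate \emph{both} the perturbation of the upstream activations and the contribution of the current layer's weight perturbation simultaneously, and get the powers of $\kappa$ and $p$ to come out to $\kappa^{L-1}p^{L-1}$ rather than a looser $\kappa^L p^L$. The sparsity constraint is not directly used inside the Lipschitz step (the bound on $\|W_l\|_{\infty,\infty}$ suffices), so $K$ only appears through the combinatorial count and the number of coordinates requiring discretization; keeping these contributions separate is what produces the clean exponent $K$ in the final bound. A minor subtlety is also ensuring the $\delta$-net built on sparse parameter vectors actually lies inside $\cF_{\rm NN}$ (i.e., respects $\|W_l\|_{\infty,\infty}\leq \kappa$), which is immediate since rounding toward zero on a symmetric grid never increases magnitude.
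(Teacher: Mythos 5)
The paper does not prove this lemma at all---it is imported verbatim as Lemma~5.3 of \cite{chen2022nonparametric}---and your argument is precisely the standard Lipschitz-in-parameters plus grid-discretization plus sparsity-pattern-counting proof that underlies that cited result, so it is essentially the ``same approach.'' Your bookkeeping is sound: the only discrepancies are benign constants (your telescoping actually yields a factor like $L(pB_1+2p+1)(p\kappa)^{L-1}$ rather than $L(pB_1+2)(p\kappa)^{L-1}$, and your slot count $2Lp^2$ quietly assumes the input/output dimensions are absorbed into the width bound $p$), none of which change the form $\bigl(C L^2(pB_1+2)\kappa^L p^{L+1}/\delta\bigr)^K$ or matter in any application, since the bound only ever enters through its logarithm.
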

Substituting the network architecture in (\ref{eq.gene.proof.approx}) into Lemma \ref{lem.covering} gives rise to 
\begin{align}
	\log \cN(\delta,\cF,\|\cdot\|_{L^{\infty,\infty}})\leq C_1D_1(\log D_1)\varepsilon_1^{-d_1} \log^2 \frac{1}{\varepsilon_1}\left(\log \frac{1}{\varepsilon_1} + \log \frac{1}{\delta}\right)
	\label{eq.gene.proof.logcn}
\end{align}
for some constant $C_1$ depending on $d_1,d_2,M_1,M_2, |\Omega_{\cX}|,|\Omega_{\cY}|,L_{\Psi}, L_D,\tau$ and the surface area of $\cM$. Setting $\delta=1/n$ and substituting (\ref{eq.gene.proof.logcn}) into (\ref{eq.gene.proof.1}) gives rise to
\begin{align}
	&\EE_{\cS}\EE_{u\sim \rho}  \left[\|\widehat{\Psi}(\ub)-S_{\cY}\circ\Psi(u)|_{S_{\cY}}^2\right]\leq  8\varepsilon_1^2+ 8\zeta^2+\frac{C_2D_1(\log D_1)}{n}\varepsilon_1^{-d_1} \log^2 \frac{1}{\varepsilon_1}\left(\log \frac{1}{\varepsilon_1} + \log n\right)
\end{align}
where $C_2$ is a constant depending on $d_1,d_2,M_1,M_2, |\Omega_{\cX}|,|\Omega_{\cY}|,L_{\Psi}, \sigma,\tau$ and the surface area of $\cM$. Setting $\varepsilon_1=n^{-\frac{2}{2+d_1}}$ finishes the proof. The network architecture is specified in (\ref{eq.gene.architecture}).
\end{proof}

\section{Proof of Lemma \ref{lem.alpha}}
\label{lem.alpha.proof}
\begin{proof}[Proof of Lemma \ref{lem.alpha}]	
	For (i), we have
	\begin{align*}
		|\alpha_k^v(v)|=&\langle v, \omega_k\rangle_{\cY}
		\leq  \|v\|_{\cY}\|\omega_k\|_{\cY}
		= \|v\|_{\cY}
		\leq  |\Omega_{\cY}|\|v\|_{L^{\infty}(\Omega_{\cY})}
		\leq  M_2|\Omega_{\cY}|.
	\end{align*}

	To prove (ii), for simplicity, we denote $\alpha^{\ub}_k(\ub)=\alpha_k^v\circ \Psi\circ D_{\cX}(\ub)$.
	Let $(U,\phi)$ be a chart of $\cM$ such that $\phi$ is Lipscthiz with Lipschitz constant $L_{\phi}$.  We need to show that $\alpha^{\ub}_k\circ \phi^{-1}: \phi(U)\rightarrow \RR$ is a Lipschitz function. 
	For any $\zb_1,\zb_2\in \phi(U)$, denote $\ub_j=\phi^{-1}(\zb_j),\ u_j=D_{\cX}(\ub_j),\ v_j=\Psi(u_j)$ for $j=1,2$.
	We have
	\begin{align}
		&|\alpha^{\ub}_k\circ \phi^{-1}(\zb_1)-\alpha^{\ub}_k\circ \phi^{-1}(\zb_2)| \nonumber\\
		=&|\alpha_k^v(v_1) -\alpha_k^v(v_2)|\nonumber\\
		= &|\langle v_1,\phi_k\rangle_{\cY} -\langle v_2,\phi_k\rangle_{\cY}|\nonumber\\
		= & |\langle v_1-v_2,\phi_k\rangle_{\cY}|\nonumber\\
		\leq & \|v_1-v_2\|_{\cY}\nonumber\\
		=&\|\Psi(u_1)-\Psi(u_2)\|_{\cY}\nonumber\\
		\leq &L_{\Psi}\|u_1-u_2\|_{\cX}\nonumber\\
		=& L_{\Psi} |\Omega_{\cX}|\|u_1-u_2\|_{L^{\infty}(\Omega_{\cX})}\nonumber\\
		=&L_{\Psi} |\Omega_{\cX}||D_{\cX}\circ \phi^{-1}(\zb_1)-D_{\cX}\circ \phi^{-1}(\zb_2)|,
		\label{eq.alphaLip.1}
	\end{align}
 
	According to Assumption \ref{assum.cX}(iii), $D_{\cX}\circ \phi^{-1}$ is a Lipschitz function. For any given finite atlas of $\cM$, there exists a constant $C_{\cM}$ that is an upper bound of the Lipschitz constant of $D_{\cX}\circ \phi^{-1}$ for all $\phi$'s in the atlas. Applying this property to (\ref{eq.alphaLip.1}) gives rise to
	\begin{align}
		|\alpha^{v}_k(v_1)-\alpha^{v}_k(v_2)|\leq CL_{\Psi} |\Omega_{\cX}|\|\zb_1-\zb_2\|_2.
	\end{align}
\end{proof}

\end{document}